\newtheorem{proposition}[]{Proposition}
\newtheorem{assumption}[]{Assumption}
\theoremstyle{definition}
\begin{document}
\title{
Scale adaptive and robust intrinsic dimension estimation\\
via optimal neighbourhood identification
}

\author{
   {Antonio} {Di Noia}\textsuperscript{1,2,*},
   {Iuri} {Macocco}\textsuperscript{3},
   {Aldo} {Glielmo}\textsuperscript{4},
   {Alessandro} {Laio}\textsuperscript{5},
   and {Antonietta} {Mira}\textsuperscript{2,6}
}

\date{\today}

\maketitle

\let\thefootnote\relax\footnotetext{
\textsuperscript{1}Seminar for Statistics, Department of Mathematics, ETH Zurich.
\textsuperscript{2}Faculty of Economics, Euler Institute, Università della Svizzera italiana.
\textsuperscript{3}Department of Translation and Language Sciences, Universitat Pompeu Fabra.
\textsuperscript{4}Banca d'Italia. The views and opinions expressed in this paper are those of the authors and do not necessarily reflect the official policy or position of Banca d'Italia.
\textsuperscript{5}International School for Advanced Studies (SISSA).
\textsuperscript{6}Department of Science and High Technology, University of Insubria.}

\begin{abstract}
The Intrinsic Dimension (ID) is a key concept in unsupervised learning and feature selection, as it is a lower bound to the number of variables which are necessary to describe a system. 
However, in almost any real-world dataset the ID depends on the scale at which the data are analysed.
Quite typically at a small scale, the ID is very large, as the data are affected by measurement errors. At large scale, the ID can also appear erroneously large, due to the curvature and the topology of the manifold containing the data. In this work, we introduce an automatic protocol to select the sweet spot, namely the correct range of scales in which the ID is meaningful and useful. This protocol is based on imposing that for distances smaller than the correct scale the density of the data is constant. In the presented framework, to estimate the density it is necessary to know the ID, therefore, this condition is imposed self-consistently. We illustrate the usefulness and robustness of this procedure to noise by benchmarks on artificial and real-world datasets.
\end{abstract}
\noindent {\bf Keywords:} high-dimensional data, Poisson process, dimensionality reduction, intrinsic dimension, nearest-neighbour methods.

\section{Introduction}
\label{sec:intro}
The study of high-dimensional data is a fundamental task in modern statistics, as many real-world datasets are often characterized by a large number of variables, a fact that may hinder their interpretation and analysis.
In most high-dimensional applied settings, a subset of the variables or non-linear combinations of them are often sufficient to describe the dataset.
The Intrinsic Dimension (ID) represents the minimum number of independent variables that allow to locally reproduce the data with minimal information loss.
This quantity is an integral part of several data science methods, from dimensionality reduction, where it can provide a meaningful target (\citealp{rozza2012novel,gliozzo2024intrinsic,van2009dimensionality}), to density estimation or clustering, where it can be used to improve the accuracy and robustness of existing algorithms (\citealp{rodriguez}).
Furthermore, the ID has become increasingly used as a stand-alone data-driven observable to quantify the complexity of datasets.
Usage of ID estimation in this sense can be found in different scientific disciplines such as in biology (\citealp{facco2019intrinsic}), genomics (\citealp{macocco}), quantum physics (\citealp{mendes2021intrinsic,mendes2021unsupervised}), computer science (\citealp{ansuini2019intrinsic,valeriani2024geometry}), image analysis (\citealp{pope2021intrinsic}).
The presence of (non-linear) correlation in data is still fostering the research of low-rank (tensor) decomposition methodologies, which can be employed to ease the tasks of estimating the density (\citealp{amiridi2022low,vandermeulen2021beyond}), finding meaningful latent-variable models (\citealp{anandkumar2014tensor}) and understanding the patterns and training dynamics of weights of deep neural networks (\citealp{jacot2023bottleneck,ledent2025generalization,balzano2025overview}), ultimately to make them more interpretable and efficient. The ID can play a role of setting a lower bound to the rank decomposition, allowing for more grounded and faithful low-rank decompositions.
A further context where the intrinsic dimension is recently arising is in the theoretical analysis of various statistical methods, in particular regarding convergence rates when data are assumed to have a low-dimensional structure, see e.g. \cite{kpotufe2011k}, \cite{nakada2020adaptive}, \cite{rosa2024posterior}, \cite{berenfeld2022estimating}, \cite{rosa2024nonparametric}, \cite{chakraborty2024statistical}, \cite{tang2024adaptive}.

Many different, but practically equivalent, ID definitions have appeared in the literature.
For example, it has been defined as the dimension of the support of the density which is generating the data. A comprehensive overview of these definitions can be found in (\citealp{campadelli}).
To estimate the ID, many approaches have been proposed such as projective (e.g.\ \citealp{fukunaga1971algorithm}), fractal (e.g.\ \citealp{grassberger1983measuring,erba2019intrinsic}), optimal transport (e.g.\ \citealp{block2022intrinsic}), and graph-based methods (e.g.\ \citealp{costa2005estimating,serra2017dimension}).
In this paper we focus on nearest-neighbour (NN) methods and we build on some very recently proposed NN ID estimators (\citealp{facco2017,levina2004maximum,denti,qiu2022intrinsic,macocco}), as they are statistically well-grounded and hence allow for uncertainty quantification through exact or asymptotic results.
Despite the favourable statistical foundations, all NN-based ID estimators possess a set of arbitrary free parameters that strongly control the scale at which one observes the data or, equivalently, the level of the dataset decimation (\citealp{facco2017,denti}).
This often leads to unclear choices for the right scale or the proper decimation level to consider, which can easily produce significantly biased estimates.
To address this issue, we propose an adaptive estimation methodology with provable theoretical guarantees. The methodology is formulated in a likelihood-based framework and further extended to its Bayesian counterpart. As we will show, our proposal brings three main advantages. 
First of all, it allows to self-consistently verify 
that the density of the data is locally constant. This is a crucial step when employing any NN-based methodology both in ID estimation as well as in density estimation.
The second advantage is related to the fact that many real datasets are affected by very noisy measurements. We will show that our proposal allows us to overcome this issue by automatically escaping the noisy neighbourhoods of the observations, providing better estimates with respect to all the other NN-based estimators. 
Finally, and most importantly, our methodology allows for the automatic selection of the scale at which the researcher should look at the data. 
This is done in a point-wise adaptive fashion, which avoids the need for an arbitrary choice of the scale which is, conversely, typically kept constant across the entire dataset. 
The optimal scale identification is a fundamental and often overlooked feature/passage. Real-world datasets can often be described by hierarchical structures characterized by a density varying from region to region. For such reasons, providing an ID estimate without referring to its scale can be misleading.

The remainder of the paper is organized as follows. 
In Section \ref{sec:meth}, 
we introduce our adaptive ID estimator.
In Section \ref{sec:num}, we illustrate our methodology on some simulated and real datasets, comparing its performance with non-adaptive procedures. 
Some concluding remarks are offered in Section \ref{sec:disc}.
The Appendix reports theoretical guarantees for the proposed methodology, additional simulation experiments and some extensions of the method.

\section{Methodology}
\label{sec:meth}
\subsection{Binomial Intrinsic Dimension Estimator (BIDE) }
We introduce our general modelling framework, which builds on \cite{macocco}, where the authors propose the Intrinsic Dimension estimator for Discrete Datasets (I3D), specifically tailored for estimating the ID in discrete spaces. Let $(\Omega, \mathcal{A}, P)$ be a probability space, and let $\mathcal{M}^D$ be a $D$-dimensional metric measure space. Let us consider $X_1, \dots, X_n$, independent copies of a random vector $X:\Omega \to \mathcal{M}^D$, sampled in a small neighbourhood of a $d$-dimensional subset of $\mathcal{M}^D$, where the size of the neighbourhood may depend on the scale of a random noisy perturbation of the samples, which can make such neighbourhoods appear effectively higher-dimensional.
Now, consider a measurable subset $A\subset \mathcal{M}^D$ with measure $\mu(A)$, and let us model the locations of the data points by a Poisson point process with intensity function $h: \mathcal{M}^D \to \mathbb{R}_+$ that is constant and equal to $\rho$ on the subset $A$. Thus, the number of data points in $A$ denoted by $k_A$ is a Poisson random variable with parameter $\rho \mu(A)$ and probability mass function (pmf) $p_{k_A}$ given by
\begin{equation*}
    p_{k_A}(x) = \frac{(\rho \mu(A))^{x}}{x!}\exp(-\rho \mu(A)).
\end{equation*}
Now, let $A\subset B \subset \mathcal{M}^D$ where $B$ is a further measurable subset and assume $h$ is constant and equal to $\rho$ also on $B$, we get the distribution of the number of points in $B\setminus A$, namely, $k_B-k_A\sim \mathrm{Poisson}(\rho \mu(B\setminus A))$ and the conditional distribution $k_A|k_B \sim \mathrm{Binomial} (k_B, p)$ with $p= \mu(A)/\mu(B)$, 
provided that the underlying Poisson process is locally homogeneous, i.e.\ homogeneous on the scale of the $B$ set.
Note that the intensity function can be regarded as the non-normalized measure underlying the data-generating process, therefore the local homogeneity of the Poisson process is often regarded as a constant density, assuming it exists.

At this stage, we can specify the nature of $\mathcal{M}^D$. We take $\mathcal{M}^D\equiv\mathbb{R}^D$ equipped with the Euclidean metric, $\mu$ is the Lebesgue measure on $\mathbb{R}^D$, and the data are sampled in a small neighbourhood of a $d$-dimensional $C^1$ manifold. Thus, at sufficiently small scales, Euclidean distances in $\mathbb{R}^D$ are well approximated by distances in the tangent space of the manifold, and volumes of small balls scale like those of a $d$-dimensional Euclidean ball.

Now, let $A$ and $B$ open balls centred at the same point on the manifold such that $A \subset B$. Then, for a sufficiently small outer ball $B$, $p \approx (t_A/t_B)^d = \tau^d$, where $t_A$ and $t_B$ are the radii of the two balls and $d$ corresponds to the ID.
Let $k_{A,i}$ and $k_{B,i}$ be, respectively, the number of points in the open balls $A$ and $B$ centred at unit $X_i=x_i$, where we do not count $x_i$ since we condition on it. We consider a likelihood formulation introducing
\begin{equation*}
    L(d)= \prod_{i=1}^n \binom{k_{B,i}}{k_{A,i}}  (\tau^d)^{k_{A,i}} (1-\tau^d)^{k_{B,i}-k_{A,i}},
\end{equation*}
which is maximized at 
\begin{equation}
\label{eq:bide}
    \widehat d = 
\frac{\log((\sum_{i=1}^n k_{A,i}) / (\sum_{i=1}^n k_{B,i}))}
     {\log\tau}.
\end{equation}
The estimator $\widehat d$ will be denoted as the \emph{Binomial ID Estimator} (BIDE).
Remark that to identify $d$, it is sufficient to fix just two tuning parameters among $t_A$, $t_B$ and $\tau$. In this paper, we will propose an adaptive data-driven choice for such tuning parameters.
It is worth mentioning that the estimator for Discrete Datasets (I3D) presented in \cite{macocco} is obtained setting $\mathcal{M}^D\equiv \mathbb{Z}^D$ equipped with the $L^1$ metric where volumes are computed using the counting measure and Ehrhart polynomials, thus, leading to an alternative non-closed expression of $\widehat d$. Moreover, it is also empirically shown by the authors that violating the independence assumption does not induce significant errors as long as $\frac{1}{n}\sum_{i=1}^n k_{A,i}\ll n$.

We observe that the ID estimation framework outlined above shares similarities with \cite{10.1145/1273496.1273530}, where the authors first obtain local ID estimates and then aggregate them by averaging or ``voting'' to get a global estimate.
Differently, we adopt a likelihood-based global approach, that notably leads to more robust global estimates; this concept is remarked also in \cite{MacKay}.

\subsection{Largest uniform neighbourhoods in nearest-neighbour methods}

Recall that we assumed that the data are sampled in a small neighbourhood of a $d$-dimensional $C^1$ submanifold of $\mathbb{R}^D$.
Thus, at sufficiently small scales, volumes of small balls scale like those of a $d$-dimensional Euclidean ball, 
which we use to model volumes of small balls.
Let $r_{i,j}$ be the distance between $i$ and its $j$-th NN with $r_{i,0}$ conventionally set equal to $0$, and denote by $B(x_i,r_{i,k})$ the ball centred at realization $X_i=x_i$ with radius $r_{i,k}$. The volume of the spherical shell between neighbours $j-1$ and $j$ of a point $i$ is given by
$$
v_{i,j}: = \mu(B(x_i,r_{i,j})\setminus B(x_i,r_{i,j-1})) =\Omega_d (r^d_{i,j}-r^d_{i,j-1}),
$$
where $\Omega_d = (2\Gamma(3/2))^d/\Gamma(d/2+1)$ is the volume of a $d$-dimensional unit ball with $\Gamma$ denoting the Euler's Gamma function.
Accordingly, the volume of the ball centred in $i$ with radius given by its $k$-th NN is
$$
V_{i,k}:= \mu (B(x_i,r_{i,k}))= \sum^k_{j=1} v_{i,j} = \Omega_d \,r^d_{i,k}.
$$
If we consider the intensity constant and equal to $\rho_i$ within a point $i$ and its $k$-th NN at distance $r_{i,k}$, then spherical shell volumes $v_{i,j}$ with $j\in\{1,\dots,k\}$ are the realizations of $k$ independent and identically distributed (iid) exponential random variables with rate $\rho_i$.
Therefore, we can write a joint probability density function for $k$ iid exponential random variables $f(v_{i,1},\dots,v_{i,k}) =\prod_{j=1}^{k} \rho_i e^{-\rho_i \, v_{i,j}} $.
Thus, looking at the joint density, evaluated at the observed spherical shell volumes $v_{i,1},\dots, v_{i,k}$, as a function of the parameter $\rho_i$, and by taking its logarithm we obtain the log-likelihood
\begin{equation*}
L_{i,k}(\rho_i): =  \sum_{j=1}^{k} \log (\rho_i e^{-\rho_i \, v_{i,j}}) =  k \log \rho_i - \rho_i V_{i,k},
\end{equation*}
which is maximized at $\widehat{\rho}_{i} = k/V_{i,k}$, matching the $k$-NN local non-normalized density estimator. Its standard deviation, given by $ \widehat{\rho}_i/\sqrt{k}$, is obtained as the inverse square root of the Fisher information.
It is worth noticing the usual bias-variance trade-off: the estimator's standard deviation decreases as $k$ increases, and, on the other hand, the constant density approximation, on which the exponentiality assumption relies, becomes less solid since the density in a ball of radius $r_{i,k}$ might become non-constant as $k$ increases. 
Therefore, as we have observed, the $k$-NN density estimator is not immune to the usual bias-variance trade-off problem. However, since the influence of $k$ on the variance can be somewhat decoupled from its impact on the bias, there are margins of improvement.
Indeed, to achieve low variance the estimator typically requires a high number of observations or the selection of a large $k$, and for large $k$ a systematic bias is introduced primarily for the statistical units for which the region enclosing $k$ NNs undergoes rapid density variations.
In essence, the selected bandwidth for the $k$-NN estimator should be small in regions where the density of data points changes rapidly but it can be larger in areas where the density undergoes slower variations.

Following this reasoning, \cite{rodriguez} developed a quantitative algorithm to find the largest $k$, specific to each data point, that enforces an approximately constant density within a certain confidence level, thus facilitating and automatizing more accurate density estimation in regions where the density exhibits gradual variations. We formalize the procedure to find the optimal $k$ in a statistical hypothesis testing framework.
For each point $i$ we consider a growing number of neighbours and iteratively compare two different likelihood models.
The first model denoted as $M1$, assumes that the densities of the Poisson process at point $i$ and at its $(k+1)$-th NN are different, say $\rho_i$ and $\rho '_i$.
In this case, the log-likelihood evaluated at its maximisers is
\begin{equation}
\label{eq:M1}
    L^{M1}_{i,k}(\widehat \rho_i, \widehat \rho'_i)= \underset{\rho_i,\rho'_i>0}{\max} (L_{i,k}(\rho_i)+L_{k+1,k}(\rho'_i))=k \log \Big(\frac{k^2}{V_{i,k}V_{k+1,k}}\Big)-2k
\end{equation}
The second model denoted as $M2$, on the other hand, assumes no density variation between $i$ and its $(k+1)$-th NN, leading a maximised log-likelihood given by
\begin{equation}
\label{eq:M2}
    L^{M2}_{i,k}(\widehat \rho_i, \widehat \rho_i)
    = \underset{\rho_i>0}{\max} (L_{i,k}(\rho_i)+L_{k+1,k}(\rho_i))=2k \log \Big(\frac{2k}{V_{i,k}+V_{k+1,k}}\Big)-2k.
\end{equation}
Fundamentally, the volumes $V_{i,k}$ and $V_{k+1,k}$ in the above equations can be taken to be the volumes of balls on the intrinsic manifold by appropriately using $d$ instead of the embedding dimension $D$ for their computation.
The two maximised log-likelihoods are then compared using a likelihood-ratio test statistic. As long as they are deemed statistically compatible within certain confidence, the putative optimal value for $k$ is increased by one and a new test is carried out; when the likelihood of model ${L}_{i,k}^{M1}$ utilises its extra degree of freedom ($\rho'_i$) in the maximization to become significantly larger than ${L}_{i,k}^{M2}$ it means that the density variation becomes significant and, thus, the constant density approximation does not hold for the tested $k+1$, meaning that $k$ is the optimal neighbourhood size for point $i$.
More precisely, by Wilks' theorem (\citealp{wilks1938large}) the likelihood-ratio test statistic
\begin{align*}
\begin{split}
        D_{i,k}&= -2(L^{M2}_{i,k}(\widehat \rho_i , \widehat \rho_i) - L^{M1}_{i,k}(\widehat \rho_i, \widehat \rho'_i))\\
        &= -2k\left(\log V_{i,k}+\log V_{k+1,k} - 2 \log (V_{i,k}+V_{k+1,k})+ \log 4 \right)
\end{split}
\end{align*}
converges in distribution to a $\chi^2_1$ random variable, and it can be used to test $H_0: \rho_i = \rho'_i$ against $H_1: \rho_i \neq \rho'_i$. Since the test can be performed for every $k$ it follows that for each point $i$ we can select $k^*_i$ as 
\begin{equation}
\label{eq:kstar}
    k^*_i =  k_{max} \wedge \min \{k : D_{i,k}\geq D_\text{thr} \} 
\end{equation}
where $D_\text{thr}$ is the rejection threshold and can be selected as the $(1-\alpha)$-order quantile of the $\chi_1^2$ distribution.
The choice of $\alpha$ is strongly linked to the usual bias-variance trade-off or the signal-noise disentanglement; in fact, the amount of information that should be regarded as noise is ultimately a choice of the field experts, and the proposed modelling framework allows a clear and transparent control on such choice. In this perspective, we also remark that $\alpha$ acts as a regularization parameter rather than a globally calibrated significance level.

Thanks to this framework, the neighbourhood sizes $k_1^*,\dots,k_n^*$, are adaptively optimised for each data point, and can be plugged in all the results of the canonical $k$-NN to give birth to an improved $k^*$-NN estimator.
It is worth noticing that computational complexity of the method is linear in the sample size, as we fix the maximum extension of the neighbourhood to a large yet finite value $k_{max}$.

\subsection{Adaptive Binomial Intrinsic Dimension Estimator (ABIDE)}
We are now ready to present the new algorithm for ID estimation which automatically uses optimal neighbourhoods. 
The algorithm can be seen as an extension and improvement of BIDE. In particular, the idea revolves around the concept of exploiting the largest possible neighbourhoods for the computation of the ID while, at the same time, satisfying the locally constant density hypothesis encapsulated in the Poisson process modelling framework at the basis of the estimator.
Concretely, this means computing the ID with the binomial estimator fixing the size of the neighbourhood not to a constant value $k$ but to the adaptively chosen values $k^*_i$ found by the optimisation procedure described above.
In this manner we can select, separately for each data point $X_i$, the largest possible neighbourhood where the hypothesis of constant density is not rejected.
The distance of the $k_i^*$-th neighbour from point $i$ is then $r_{i,k_i^*}$.

There is an important caveat that makes the methodology much more challenging. 
The algorithm that returns the optimal $k_i^*$ requires the knowledge of the ID to compute volumes of balls; see \eqref{eq:M1} and~\eqref{eq:M2}. 
To address this key issue we propose the following iterative procedure.
First, an initial value of ID has to be found using a standard method like the BIDE or the Two Nearest Neighbours (2NN) estimator \cite{facco2017}.
Since the 2NN is well-established, it does not have tuning parameters and it focuses on very small neighbourhoods of order 2, we adopt it as the starting value for $d$.
Moreover, it is interesting to note that the 2NN in its maximum likelihood formulation is equivalent to BIDE under certain conditions. To see this, we recall that the 2NN estimator in its basic maximum likelihood formulation \citep{denti} is given by
$$\widehat d_{2NN}= \frac{n}{\sum_{i=1}^n \log (r_{i,2}/r_{i,1}) }.$$ 
Therefore, setting $\widehat d (\tau) = \widehat d_{2NN}$ and solving for $\tau$, shows that BIDE and 2NN are equivalent when $\tau = \exp( \log((\sum_{i=1}^n k_{A,i})/ (\sum_{i=1}^n k_{B,i} )) n^{-1} \sum_{i=1}^n \log ( r_{i,2}/r_{i,1}))$. Moreover, note that if the count ratio $(\sum_{i=1}^n k_{A,i})/ (\sum_{i=1}^n k_{B,i})$ is close to  $1/e$, then $\tau$ is close to the geometric mean of $\{r_{i,1}/r_{i,2}\}_{i=1,\dots,n}$.

Now, proceeding with the description of our iterative procedure, once we have set a starting value for $d$, the optimal set of $k^*_1,\dots,k^*_n$ is then computed according to \eqref{eq:kstar},
and used as input for BIDE which gives a new ID estimate. 
Such ID, typically different from the one obtained with the 2NN, is used to compute another set of $k^*_1,\dots,k^*_n$, that will become the input for an updated ID estimate. 
The procedure continues until convergence of the ID estimate.
More precisely, at each iteration, once $k_1^*,\dots,k_n^*$ have been computed, $d$ is estimated using BIDE as in \eqref{eq:bide}:
it suffices to set $k_{B,i}= k_i^*-1=:k_{B,i}^*$ and $t_{B,i}(k_i^*)=r_{i,k_i^*}$ as the radius of the open ball containing $k_{B,i}^*+1$ points. 
Successively, we compute $k_{A,i}^*$ as the number of points contained in the open ball of radius $t_{A,i}(k_i^*)=t_{B,i}(k_i^*)\tau$ without counting point $i$, and we set $k_{A,i}=k_{A,i}^*$. 
Furthermore, we can devise an optimal procedure to select the ratio $\tau$ of the radii of the two open balls.
As shown in the supplementary material of \cite{macocco}, minimizing the asymptotic variance of the estimator in \eqref{eq:bide} leads to the optimal choice $\tau \approx c_*^{1/d}$ where $c_*=0.2032$.
Since the optimal $\tau$ depends on $d$, the true ID, we can naturally include the computation of $\tau$ in the iterative procedure by using the latest estimate of the ID at every step in place of its true value.
We refer to this procedure as \emph{Adaptive Binomial ID Estimator} (ABIDE) and denote the resulting estimator with $d^*$.

\begin{figure}[!htb]
    \centering 
    \includegraphics[width=1.0\linewidth]{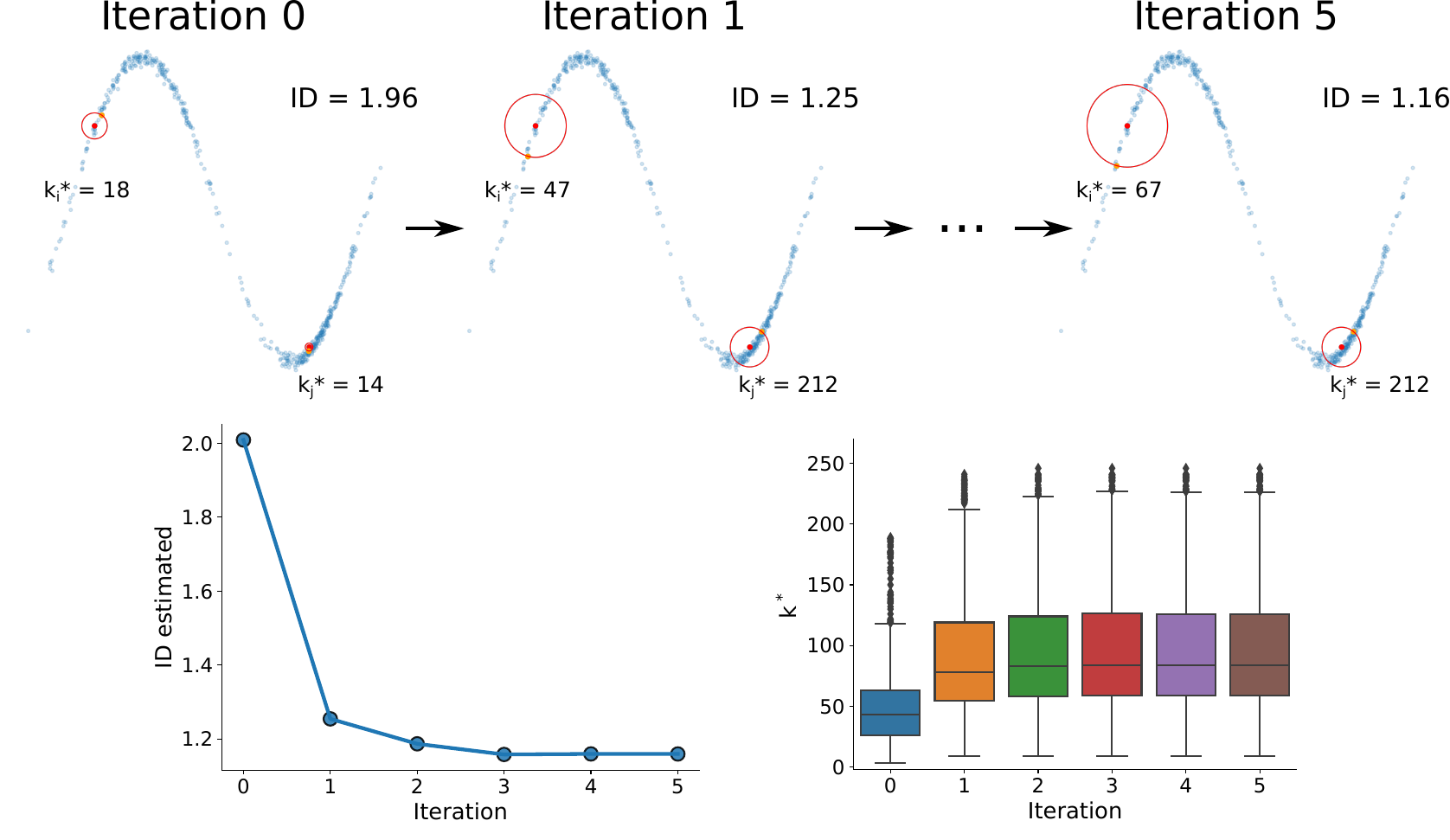}
    \caption{The ABIDE algorithm, thanks to its iterative nature, progressively adjusts the estimates of the ID and the size of (approximately constant density) neighbourhoods, allowing it to escape the noise scale and to find the true ID of the underlying manifold. 
    The first row shows how, throughout the iterations, the size of the neighbourhoods for two selected points $i$ and $j$ grows as the ID estimates lower from 2 down to 1.
    In the second row, we report the evolution of the ID estimate and the boxplots representing the subsequent distributions of the $k^*$.
    }
    \label{fig:sketch}

\end{figure}
In Figure~\ref{fig:sketch} we report a sketch of our algorithm on a toy dataset created in the following way: 
We sample 1000 points such that $X_1,\dots,X_{500}\sim\mathcal{N}(\pi/2,1)$, $X_{501},\dots,X_{1000}\sim\mathcal{N}(5\pi/3,0.5^2)$ and $Y_i=\sin(X_i)+\epsilon_i$, where $\epsilon_i\sim\mathcal{N}(0,0.025^2)$. The standard deviation of the noise sets the scales below which the dataset appears to be 2-dimensional.
The bottom left panel shows how the one-dimensional nature of the data emerges throughout the successive iterations of ABIDE. Simultaneously, the neighbourhoods where the density can be considered approximately uniform get larger and larger, as shown in the bottom right panel.
The reason is that, as long as the estimated ID is approximately 2 (which is the value obtained using 2NN at iteration 0) the manifold is expected to be 2-dimensional, and thus the uniform density test is performed onto 2-dimensional balls (disks).
Accordingly, the size of the uniform neighbourhoods is limited to scales comparable with the typical magnitude of the noise along the $Y$ coordinates, beyond which the density of data points belonging to a (supposedly) 2-dimensional manifold drops to 0.
However, the extension of such neighbourhoods is large enough so that, at iteration 1, the estimated ID using BIDE in combination with the $k_i^*$ is of order 1.3.
Consequently, as the estimated ID gets closer to one, the uniform density test is performed using the correct dimension, allowing to obtain larger values for $k^*$, since the neighbours' size can extend along the proper manifold, well beyond the scale of the noise.

This example makes the interplay between the estimated ID and the size of uniform neighbours evident: it is exactly because of larger neighbourhoods that it is possible to escape the scale at which the noise is relevant and, thus, uncover the true dimensionality of the data. At the same time, the proper size of the neighbourhoods is found using the right ID.
From this perspective, it is evident how the process is intrinsically iterative.

As we will show in the Results section, in most cases few iterations are needed to obtain a stabilization of the ID estimate and of the associated $k^*_i$ values. 
Intuitively, this can be understood by the fact that the ratio $(\sum_{i=1}^nk^*_{A,i})/(\sum_{i=1}^nk^*_{B,i})$ is not strongly affected by a possibly inaccurate initial estimate of the ID, since both numerator and denominator grow proportionally if the data generating density is sufficiently regular.
In the Appendix, we provide a formal justification for the fast numerical convergence of Algorithm \ref{alg:adaptive}. 
Specifically, we show that Algorithm \ref{alg:adaptive} is a fixed point iteration and use some theoretical results regarding the large sample behaviour of the associated contraction mapping to show that it terminates.
In practice, we observe that 5 iterations are often enough to achieve a stabilization of the ID estimate with a tolerance $\delta$ of order $10^{-4}$, and most of the improvement is obtained in the first 2 iterations. In practice, we observe that also $k_1^*,...,k_n^*$ stabilise when the ID estimate stabilizes.
Thus, the suggested iterative procedure is highly scalable and brings substantial improvement over the starting estimate obtained using the 2NN with little additional computational cost. 
In the numerical experiments reported below, instead of selecting a value for $\delta$, we choose to fix the number of iterations to 5.
Note that the time complexity degrades to $O(D n^2)$ for the initial nearest-neighbour calculation when the ambient dimension $D$ is moderately large (around $20$ or larger), and is $O(n)$ for both the intrinsic dimension (ID) and $k^*$ computations (see also \citealp{dadapy_final}). In Section \ref{sec:bonferroni}, we report the execution times for the first two numerical experiments.
ABIDE is formally summarised in Algorithm~\ref{alg:adaptive} and, in the Appendix, we provide large-sample guarantees.
\begin{algorithm}[!htb]
    \caption{Adaptive-BIDE workflow}
    \label{alg:adaptive}
    \begin{algorithmic}[1]
        \State $d_\text{current}\gets$ ID obtained from the 2NN estimator
        \State $d_\text{next}\gets 0$
        \For { $it <$ max\_iter }
        \State $\tau=c_*^{1/d_\text{current}}$ 
         \For { $i <n$ }
            \State compute $k^*_i$ (using $d_\text{current}$) and set $k_{B,i}^*=k_{i}^*-1$
            \State $t_{B,i}(k_i^*)=r_{ik_i^*}$ 
            \State $t_{A,i}(k_i^*)= \tau \, t_{B,i}(k_i^*)$
            \State $k_{A,i}^*=\sum_{j=1}^{n-1}\mathbf{1}\{t_{A,i}(k_i^*)-r_{i,j}>0\}$
                \EndFor
            \State $d_\text{next}= \frac{\log((\sum_{i=1}^n{k_{A,i}^*}) / (\sum_{i=1}^n{k_{B,i}^*}))}{\log\tau}$
            \If {$|d_\text{current}-d_\text{next}|<\delta$}
                \State \textbf{break}
            \EndIf
            \State $d_\text{current}= d_\text{next}$
        \EndFor
        \State $d^*=d_\text{next}$
         \For { $i <$ n }
          \State compute $k^*_i$ (using $d^*$)
            \EndFor
        \State \textbf{return} $d^*,\;k^*_1,\dots,k^*_n$
    \end{algorithmic}
\end{algorithm}
Normal approximation of ABIDE allows for a simple uncertainty quantification of the estimates bypassing the need to use computationally intensive resampling procedures such as bootstrap or jackknife.
Indeed, the approximate confidence interval reads 
\begin{equation*}
    \Big[  d^* - \frac{z_{1-\beta/2}}{\sqrt{n I(d^*)}}  , d^* + \frac{z_{1-\beta/2}}{\sqrt{n I(d^*)}}\Big]
\end{equation*}
where $z_{1-\beta/2}$ is the quantile of order $1-\beta/2$ of the standard Gaussian distribution and $I(d^*)$ is the observed Fisher information evaluated at $d^*$ which is given by
\begin{equation}
\label{eq:fisher}
    I(d^*)=\frac{(\log\tau)^2 \tau^{d^*}n^{-1} \sum_{i=1}^n k_{B,i}^*}{1-\tau^{d^*}}.
\end{equation}
Note that the data-driven optimal choice $\tau= c_*^{1/d}$ implies that the approximate variance is proportional to $d^2/\sum_{i=1}^n k_{B,i}^*$.

To assess the reliability of the estimated ID and the selected neighbourhoods, one can exploit a similar model validation framework proposed in \cite{macocco}. 
In particular, at each iteration and for the values obtained at termination, the set of $k^*_i$ and the estimated ID are used to build a theoretical distribution with pmf $\widetilde p_{ k_A^*}$, obtained as a mixture of binomial distributions with pmf $p_{k_A^*|k_B^*}$ and mixing pmf $p_{k_B^*}$ given by the empirical pmf of $k_{B,1}^*,\dots, k_{B,n}^*$. Thus, letting $p_{k_A^*}$ be the empirical pmf of the observed $k_{A,1}^*,\dots,k_{A,n}^*$, the theoretical mixture distribution with pmf
\begin{equation*}
   \widetilde p_{ k_A^*}(x) = \sum_{y \geq 0} p_{k_B^*}(y) p_{k_A^*|k_B^*}(x|y)
\end{equation*}
is compared to $p_{k_A^*}$ through a two-sample test for equality in distribution, which is performed between an arbitrarily large artificial sample drawn from  $\widetilde p_{ k_A^*}$ and the observed $k_{A,1}^*,\dots,k_{A,n}^*$.
Differently from \cite{macocco}, who adopt a standard Kolmogorov-Smirnov test, in our analysis we choose the Epps-Singleton test (\citealp{epps1986omnibus}), which is based on the empirical characteristic function and is also valid for discrete distributions.
In the goodness-of-fit analysis, a large p-value of the test indicates that the modelling assumptions are more likely to be met and therefore the ID estimate is reliable. Note that such p-value should be read as a relative measure of goodness-of-fit useful for comparison, as it is unlikely that the binomial mixture law is exactly satisfied, and it is well-known that goodness-of-fit tests can be highly sensitive to small departures from the model when the sample size is large.

\section{Numerical experiments}
\label{sec:num}
In this section, we illustrate the behaviour of ABIDE and comment on its performance, in particular against non-iterative algorithms. 
Assuming that in the applied analyses we want to achieve robustness to low or moderate noise magnitudes, we opt for a conservative approach regarding the null hypothesis of constant density, enforcing a moderately high sensitivity to density variations.
This means that it must be relatively easy to reject $H_0$, and thus accept the alternative hypothesis that states that the density is effectively changing. 
To this aim, we fix $k_{max}=350$ (which is actually never reached in our experiments), set $\alpha =0.01$ and, accordingly, take $D_\text{thr}=6.635$. We remark that we do not claim that such choice leads to corresponding global empirical calibration, as this is not the main scope of the proposed procedure, even though the $k^*$ selection is framed as an hypothesis test.
This is a cautionary choice, as it ends up selecting small uniform neighbourhoods.
Smaller values of $\alpha$ would result in the selection of larger neighbourhoods and higher robustness of the estimator to noise, at the cost of potentially introducing bias in non-noisy scenarios.
Further numerical experiments show the effect of this choice in the Appendix, where we also consider multiple testing based selection.
We show that the results are fairly robust to such selection, however, the relevance of this trade-off between noise robustness and bias will be the object of future investigations.

\subsection{Noisy Gaussian distribution}
\label{sec:noisy_gaussian}
The first simulated scenario consists of a $d$-dimensional multivariate standard Gaussian random variables with independent components embedded in a $D$-dimensional space. 
The embedding is obtained by adding noise to all the $D$ coordinates. More precisely, let us consider the random vector $(S_1,\dots, S_d)\sim \mathcal{N}(0_d, \Sigma)$ with $\Sigma = \sigma_S^2 I_d $, and consider $\varepsilon_{1},\dots,\varepsilon_D$ independent copies of $\epsilon \sim \mathcal{N}(0,\sigma_\varepsilon^2)$, where $\sigma_S$ and $\sigma_\varepsilon$ represent respectively the signal and the noise scale, and, thus, $\sigma_\varepsilon/\sigma_S$ is the noise-to-signal ratio.
In the notation used above, we let $X=(S_1+\varepsilon_1, S_2+\varepsilon_2, \dots, S_d+\varepsilon_d, \varepsilon_{d+1},\dots,\varepsilon_{D})$ and consider independent copies $X_1,\dots,X_n$ representing the rows of the data matrix.
In the simulation study, we compare ABIDE with the 2NN estimator which is also adopted as the starting point of the ABIDE iteration. 
To assess the variability of the estimators we generate 500 independent Monte Carlo replicas 
with sample size $n=5,000$, and, compute the Monte Carlo empirical quantiles to construct the Monte Carlo 99\% confidence interval (execution times are reported in Section~\ref{sec:bonferroni}). 
We consider $\sigma_S=1$ and let $\sigma_\varepsilon$ range in $[10^{-4},10^{-1}]$.
For all the experiments we set $D=100$ and $d \in \{2,5\}$.
This is a misspecified setting because the ID that we aim to retrieve is the one that we would obtain if the data were non-noisy. 
More precisely, misspecification means that there is a mismatch between the $d$ we aim to estimate and the usual ID definitions that lead to the construction of estimators. 
Indeed, it is not possible to project the highly noisy data into a $d$-dimensional space without loss of information. Moreover, the dimension of the support of the probability distribution generating the data turns out to be larger than $d$, and this is the typical case when dealing with real data.
Nevertheless, Figure \ref{fig:simul1_noise} shows that the proposed estimator behaves well in terms of bias and variance, and, compared to the 2NN, improves protection against noisy misspecified scenarios and achieves smaller uncertainty (shaded areas) on the estimates.

 \begin{figure}[!ht]
  \centering
  \includegraphics[width=1.0\linewidth]{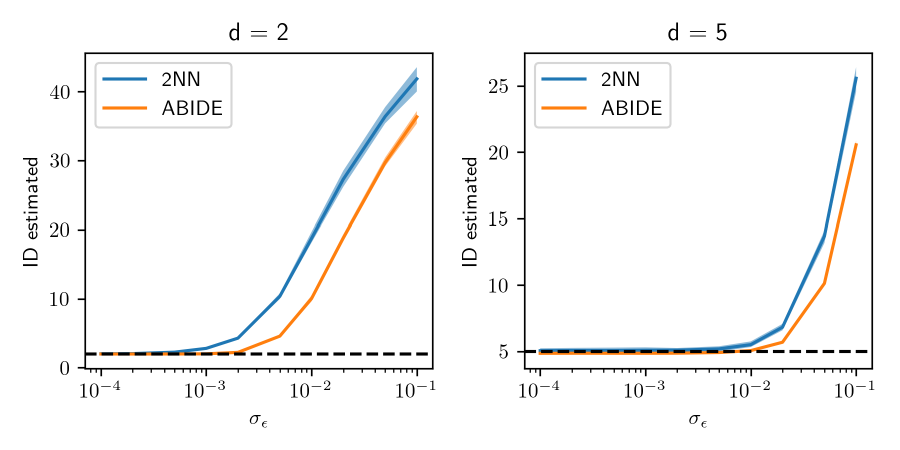}
  \caption{
  Estimated ID as a function of embedding noise for two Gaussian datasets of different $d$ and with $n=5,000$ points. 
  The shaded area is the Monte Carlo 99\% confidence interval and the black dotted line is the true $d$.
  }
  \label{fig:simul1_noise}
\end{figure}

\subsection{Noisy-curved-inhomogeneous 2-dimensional distribution}
\label{sec:moebius}
The second dataset we analyse is adapted from~\cite{d2021automatic}. Its original form, which we reported in Panel A of Figure \ref{fig:moebius1}, shows a uniform background with eight Gaussian distributions with different means and variances. As it is, such a dataset can be analysed with canonical tools, as any estimator would find an ID close to 2. 
To make the experiment more interesting, we first embed such points on a 3-dimensional M\"obius strip, which is then placed within a 20-dimensional space by adding a 20-dimensional Gaussian noise of scale $\sigma_\varepsilon=10^{-3}$.

\begin{figure}[!htb]
    \centering 
    \includegraphics[width=1.0\linewidth]{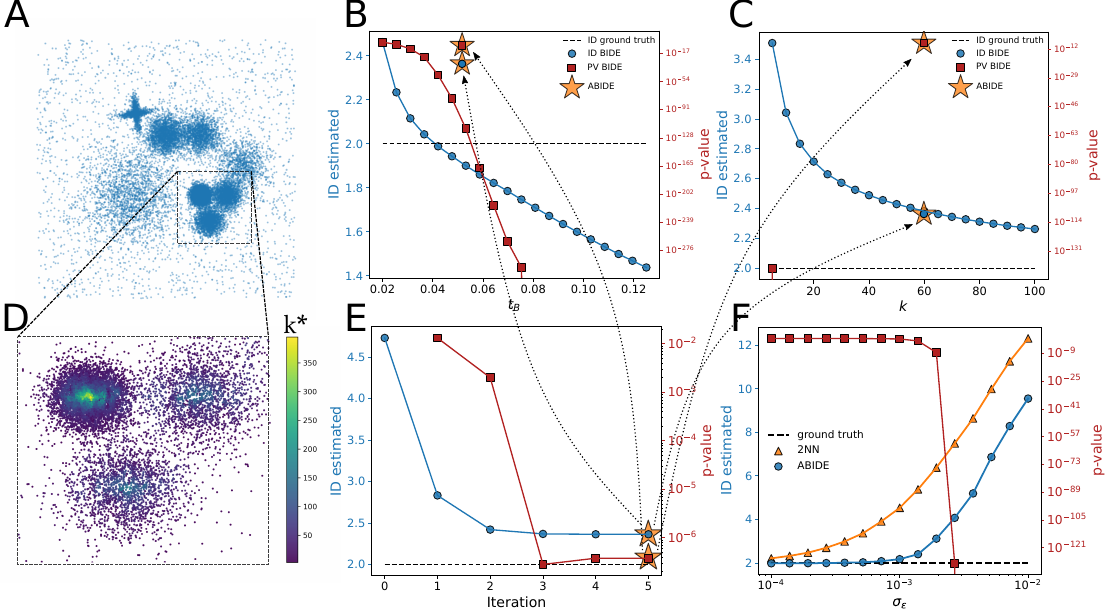}
    \caption{Panel A: the original 2-dimensional distribution of the 20,000 artificial data points under examination (taken from~\citealp{d2021automatic}). 
    The description of the embedding procedure to obtain the actual dataset is described in the main text. 
    Panel B: ID estimation using the standard non-adaptive Binomial Intrinsic Dimension Estimator (BIDE) at fixed radius or scale $t_B$ (blue circles, left y-axis) with associated p-values obtained through model validation (red squares, right y-axis in log-scale). The isolated starred points represent, respectively, the ID estimate (blue) and the p-value (red) obtained with the ABIDE estimator. 
    As a reference, the latter is placed in the average (over all data points) of the distance of the $k_i^*$ neighbour: $\frac{1}{n}\sum_{i=1}^n{t_{B,i}(k^*_i)}$.
    The dashed line (reported also in other panels) is the theoretical ID=2 of the original dataset before adding noise.
    Details and considerations in the main text.
    Panel C: ID estimation and p-values using BIDE at fixed neighbourhood size $k$ (on the x-axis). Also in this case ABIDE results are reported as the starred values, in correspondence of $\frac{1}{n}\sum_{i=1}^n{k^*_i}$.
    Panel D: 
    points are coloured according to their $k^*_i$ value to visualize the concept of adaptive neighbourhood. 
    Panel E: the evolution of the ID (blue) and the associated p-values (red) from model validation at the successive iteration of ABIDE. The final values at termination are starred and reported also in the BIDE plots (panels B and C) for comparison.
    Panel F: 2NN vs ABIDE and p-values as a function of the noise scale. In Section~\ref{sec:comparison}, we report the estimates obtained with the other most commonly used NN-based ID estimators.
    }
    \label{fig:moebius1}
\end{figure}

As a first test, we try to understand whether it is possible to observe a plateau at ID=2 as prescribed by conventional estimators by tuning the fixed radius, $t_B$, in BIDE. 
To check the goodness of the estimate we also perform the model goodness-of-fit test through the Epps-Singleton test and look at the obtained p-values.
The results are reported in Panel B of Figure \ref{fig:moebius1} and we cannot find the plateau around the ground truth ID value of 2. 
Instead, the estimated ID keeps decreasing as $t_B$ increases.
However, the presence of non-negligible noise, still relevant at such small scales, does not allow us to find the ID of the manifold within such neighbourhoods, as those are too small to overcome the noise.
Conversely, as soon as one measures the ID at larger scales by increasing $t_B$ to escape the noise, many of the selected neighbourhoods undergo high-density variations.
Accordingly, the p-values start declining together with the ID. As a consequence, if one uses the goodness-of-fit of the model as a criterion for choosing the radius $t_B$, one would be forced to conclude that the ID of this data set is significantly larger than the ground truth value.

To try to account for density variations, we fixed the neighbourhood size and computed the ID using BIDE with $k$ fixed. 
Also in this case the results, reported in Panel C of Figure \ref{fig:moebius1}, are unsatisfactory. 
Even if the ID appears to slowly converge to a value close to 2 as $k$ grows, the relative p-values are extremely low, which would lead to consider the ID obtained with a large $k$ unreliable. Density variations are too strong and this approach does not allow us to make reliable predictions, even if the ID trend is going in the proper direction.

Next, we apply the ABIDE algorithm (execution times are reported in Section~\ref{sec:bonferroni}).
As described above, the algorithm does not require fixing either $t_B$ or $k$, as those values are chosen automatically. 
The result of the ABIDE estimator is given by the ID (and its p-value) and the $k^*$ distribution (and eventually $t_B(k^*)$) at termination (starred values in Panel E of Figure \ref{fig:moebius1}). 
Therefore we do not have to change any parameter explicitly to find patterns (like plateaus) in the ID path.
The obtained ID and p-values are reported as starred isolated points in Panels B and C of Figure \ref{fig:moebius1} in correspondence of, respectively, $\frac{1}{n}\sum_{i=1}^n{t_{B,i}(k^*_i)}$ and $\frac{1}{n}\sum_{i=1}^n{k^*_i}$.
Quantitatively speaking, we obtained ID $\approx 2.35$, close to the true value of 2, and a p-value $\approx 5\times 10^{-7}$, which is many orders of magnitudes larger than the ones obtained at the same (averaged) scale or at fixed (for every point) neighbourhood size. This is a strong indication that choosing the neighbourhood size in a point-wise adaptive manner is the key approach if we want to obtain a statistically reliable description of the data distribution. 

We now analyse in more detail how the ABIDE results are obtained. 
Panel E of Figure \ref{fig:moebius1} shows the convergence of the ID (blue) to a value slightly above 2 within three iterations (the ID at iteration 0 was obtained employing the 2NN estimator). 
The corresponding p-values (red) are always of order $10^{-7}$ or $10^{-6}$, meaning that thanks to the adaptive neighbourhood implementation, we are properly exploring regions with a fairly constant density around each data point, resulting in a proper binomiality of the random variables $\{k_{A,i}^*|k_{B,i}^*\}_{i=1,\dots,n}$.
The values at convergence are then considered the results of the ABIDE estimator. These have been starred and reported, allowing for a direct comparison, in Panels B and C.

In the Appendix, we also report the distributions of $k_i^*$ and the related radii $t_{B,i}(k^*)$ obtained with 2NN and ABIDE. As occurred in the illustrative example of Figure \ref{fig:sketch}, 
the lowering of the ID is paired with an enlargement of neighbourhoods' size.
To visualize the concept of adaptive neighbourhood, in Panel D
we coloured the points of the dataset according to their value of $k^*$ obtained with ABIDE (for visualization reasons we used the original 2-dimensional dataset).
As one can see, points close to the border of the distribution present a small $k^*_i$, as the density sharply changes within a small number of neighbours. 
Conversely, points at the centre of a density peak show a much larger value of $k^*_i$, as the density varies in a sufficiently smoother way to be considered approximately constant on such a scale.

As a last experiment, we consider the same dataset for different noise intensities. In particular, we use a noise of $10^{-4}\le\sigma_\varepsilon\le 10^{-2}$ to embed the original 2-dimensional data points. 
We expect that the stronger the noise the harder it is to grasp the true dimensionality of the hidden manifold. 
A high noise intensity necessarily implies an overestimate of the manifold dimension, as more orthogonal coordinates are required to represent the data. 
Panel F of Figure \ref{fig:moebius1} reports the ID estimates obtained by 2NN and ABIDE (and its associated p-values) as a function of the noise intensity.
We observe how ABIDE is capable of recovering the ID of the data manifold much more robustly than the 2NN method, being able to tolerate a higher level of noise remaining much closer to the ground-truth ID value.
Indeed, as far as $\sigma_\varepsilon $ is approximately smaller than $10^{-3}$, ABIDE correctly finds ID estimates very close to 2 and, correspondingly, p-values of order $10^{-1}$. 
Differently, when $\sigma_\epsilon$ is approximately larger than $ 10^{-3}$, the noise becomes so intertwined with the manifold structure that not even ABIDE is capable of getting reasonable estimates of ID. 
Accordingly, the associated p-values drop dramatically, signaling that the estimates are no longer quantitatively reliable in such a regime. 
In Section~\ref{sec:comparison}, we report the results obtained for other commonly employed NN-based ID estimators, where it is shown that all of them have worse performance than ABIDE.

Already within this artificial example, we can identify the many advantages offered by ABIDE. Thanks to its iterative formulation, it can sequentially adjust the ID and the neighbourhood sizes to point-wise identify the largest region where the density is approximately constant and enforce the modelling framework behind the estimator.
At the same time, such a procedure allows us to surpass the barrier created by noise with higher capability than other methods and identify a proper adaptive scale at which the ID should be computed.

\subsection{Applications to image data and molecular dynamics trajectories}
In this section, we present the results of the ABIDE estimator on three real-world datasets with the following specifications.
The first one is the so-called OptDigits (\citealp{misc_optical_recognition_of_handwritten_digits_80}), a collection of data matrices of dimension $n=3,823$ and $D=64$, representing small-sized hand-written digits.
For the second application, we remain in the realm of hand-written digits with the paradigmatic MNIST dataset (\citealp{lecun2010mnist}), with size $n=70,000$ and $D=784$. Being an order of magnitude larger than the previous dataset concerning both the number of entries and features, it represents a very different challenge despite the similar nature of what is encoded in the data.
As a last example, we analyse a biomolecular trajectory. We use 
$n=3,758$ uncorrelated frames from a replica-exchange molecular dynamics simulation (\citealp{sugita1999replica}) at a temperature of 340 K of the CLN025, a small peptide made up of 10 residues that folds into a beta-hairpin (\citealp{honda_2004}). 
Each frame of the trajectory, initially computed in (\citealp{carli2021statistically}) and corresponding to a single data point, can be represented in many different ways.
We decided to work with the compact but still meaningful representation given by the dihedrals angles (\citealp{bonomi2009plumed,cossio2011similarity}). 
With this description, the dimensionality of the dataset is $D=32$. 
Since we are dealing with angles, the periodicity of $2\pi$ has to be taken into account when computing the distances between data points.

Figures~\ref{fig:opt_mnist} and ~\ref{fig:cln} show the analyses carried out similarly to what we did in the previous section on artificial data. 
In particular, for each one of these figures, the first row shows some samples extracted from the datasets. 
In the second row, from left to right, we report: ID (blue) and p-values (red) estimated with ABIDE at successive iterations; BIDE ID scaling at fixed $t_B$ with ABIDE results superimposed (as a starred blue bullet for the ID and a starred red square for the p-value); BIDE ID scaling at fixed $k$ with ABIDE results superimposed (as starred bullets); the distribution of $k^*$ computed using the 2NN estimate and the ABIDE estimate.

\begin{figure}[!htb]
    \centering
    \includegraphics[width=1.0\linewidth]{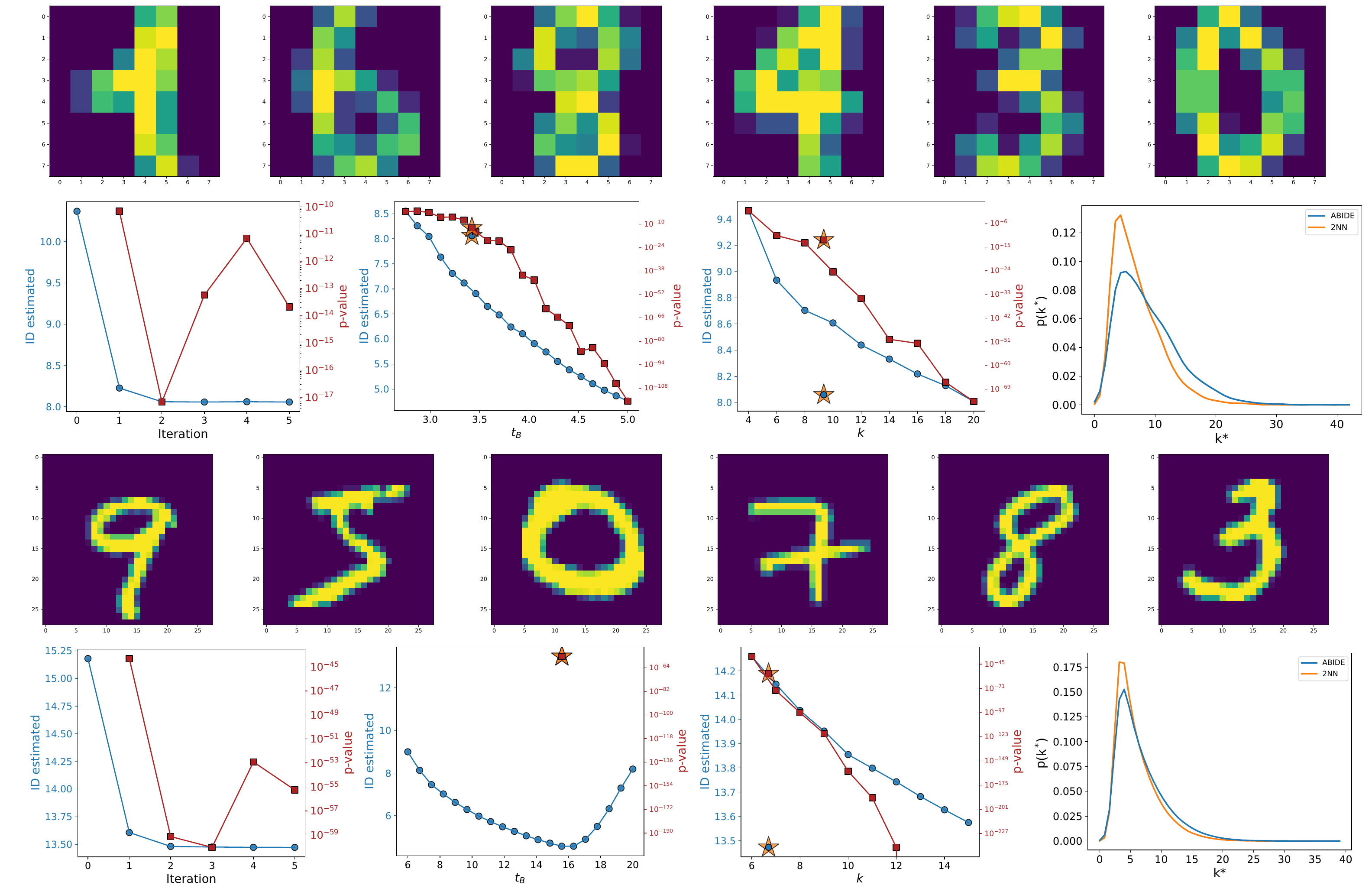}
    \caption{In the two upper rows, we report ABIDE performance on the OptDigits dataset made up of $n=3,823$ elements with embedding dimension $D=64$, together with a sample of 6 data points (first row). 
    The first panel shows the evolution of the ID (blue), and of the p-value (red), with successive ABIDE iterations. In particular, one can appreciate how the ID estimated with ABIDE stabilizes within 3 iterations and is significantly smaller than the 2NN estimate (from 10.43 to 8.05). The second and third panels show the scaling of the ID obtained with BIDE by fixing the radius $t_B$ or by fixing the number of neighbours $k$. 
    Since no plateau is present in the scaling curves, without ABIDE we would not have a proper criterion to recognize a meaningful ID.
    Conversely, ABIDE - shown as starred points centred, respectively, in $\frac{1}{n}\sum_{i=1}^n{t_{B,i}(k^*_i)}\approx 3.5 $ and $\frac{1}{n}\sum_{i=1}^n{k^*_i} \approx 9$ - allows us to uniquely identify the point-dependent scale as the largest neighbourhood size $k^*_i$ where the local density is approximately constant.
    The last panel shows the distribution of $k^*$ computed with the ID estimated by 2NN and with ABIDE at termination.\\
    In the two bottom rows, we report ABIDE performance on the MNIST dataset, with sizes $n=70,000$ and $D=784$, together with a sample of 6 data points (top row).
    In the first panel, we observe how the ID drops from the initial value of 15.19 to 13.47. The second and third panels show the ID estimate at a fixed radius $t_B$ and at a fixed number of neighbours $k$, respectively. We notice that no plateaus are present and that p-values quickly become very low (in the case of fixed radius they are numerically 0).
    Conversely, ABIDE shows how the adaptive neighbourhood size allows for an ID estimate with a significantly larger p-value at an average scale $\frac{1}{n}\sum_{i=1}^n{t_{B,i}(k^*_i)}\approx 16$, corresponding to an average neighbourhood size of $\frac{1}{n}\sum_{i=1}^n{k^*_i}\approx 6.5$. Again, in the rightmost panel, one can observe the distributions of $k^*_i$ with 2NN and ABIDE.
    }
    \label{fig:opt_mnist}
\end{figure}

We begin by highlighting the common features of all three analyses. We observe that the ID always decreases from the initial 2NN estimate (Iteration 0 in the first panels). The reason is most likely related to the high sensitivity of 2NN to the noise, which unavoidably leads to overestimating the ID.
The same phenomenon occurs for BIDE estimates at small, fixed radii (second panels): even if the density is likely to be locally constant and, accordingly, the p-values are typically high, this is the regime where noise plays a relevant, if not dominant, role in ``smearing'' and hiding the true manifold ID. Unfortunately, when exploring the ID at larger $t_B$ values, that allow us to overcome the noise, we observe that the p-value drops consistently, meaning that for many data points the selected neighbourhood has significant density variations.

A similar behaviour occurs when the neighbour's size $k$ is fixed (third panels): for small $k$ the estimates are potentially reliable but are affected by noise; conversely, when the neighbourhood becomes larger, the density is less likely to be uniform.
In all cases we indeed observe a broad distribution of $k^*$ (rightmost panel), suggesting that, for this type of data, the ID estimate at fixed neighbourhood size (or even worse, at fixed radius) is inevitably poor.
Last, but not least importantly, we also observe the absence of a clear plateau in each of the BIDE scaling plots (second and third panels of each figure). This means that it is typically not possible to unequivocally decide which is the correct ID and the corresponding scale at which it is meaningful to observe the system.
Using ABIDE we are instead capable of automatically identifying a precise scale, where the ID estimate is close to the true one and is generally associated with a relatively high p-value. 
In the same fashion as what we did for the artificial dataset, we report ABIDE results 
as starred points in the ID/p-value vs $t_B$ and ID/p-value vs $k$ panels obtained with the canonical scaling algorithms of the BIDE estimator.

In particular, for the OptDigits we obtain ID=8.05 (with a p-value $\approx 10^{-14}$), starting from an ID=10.43 computed with 2NN. Concerning the neighbourhoods, we register $\frac{1}{n}\sum_{i=1}^n{k^*_i}\approx 9$.
For MNIST we have an ID decreasing from 15.19 to 13.47 (with p-value $\approx 10^{-55}$) and $\frac{1}{n}\sum_{i=1}^n{k^*_i}\approx 6.5$. Finally, for the peptide, we have an ID converging at 5.55 (p-value $\approx 10^{-25}$) starting from an initial value of 6.78 and $\frac{1}{n}\sum_{i=1}^n{k^*_i}\approx 10$.

\begin{figure}[!htb]
    \centering
    \includegraphics[width=1.0\linewidth]{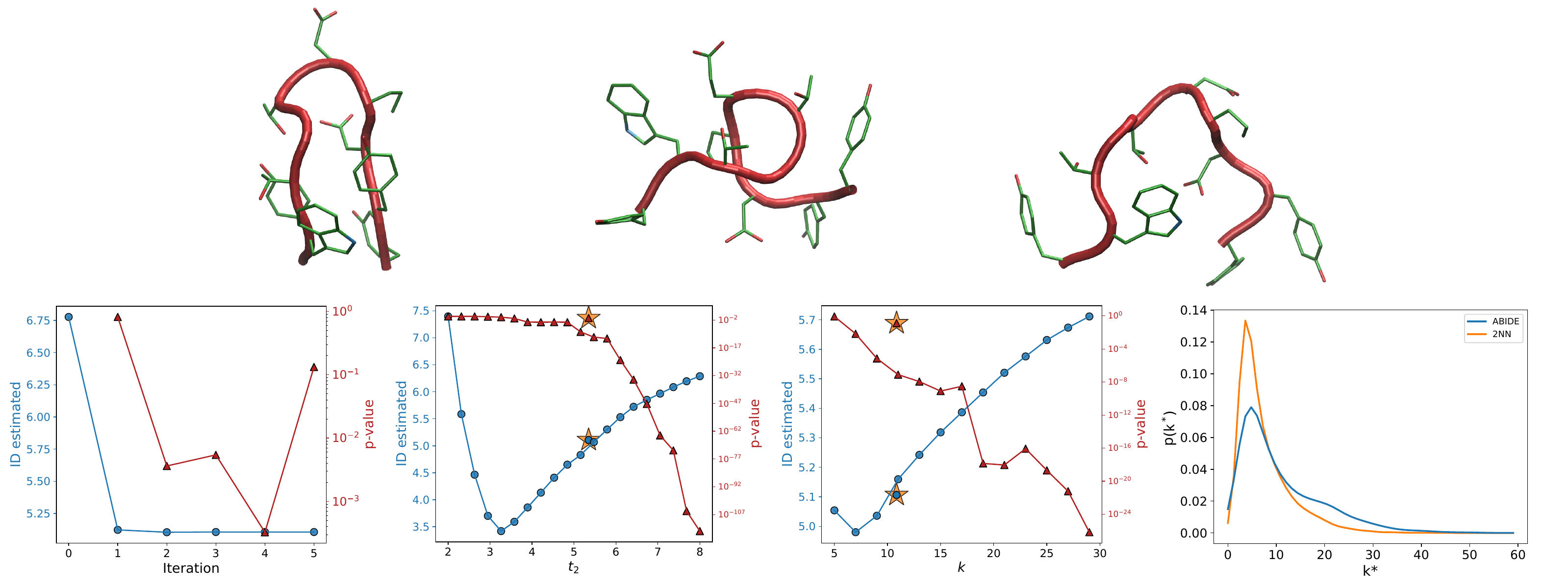}
    \caption{ABIDE performance on the dihedral representation of CLN025 peptide, a dataset of  size $n=3,758$, $D=32$. The first row shows three typical configurations of the peptide (from left to right): folded beta-hairpin, misfolded twisted, and unfolded.
    From the first panel of the second row, we can appreciate how the ID drops by 18\%, from the 2NN value of 6.78 to the ABIDE one of 5.55. BIDE estimates obtained by selecting the average ABIDE scale are close to the ones obtained with ABIDE  (second and third panels).
    }
    \label{fig:cln}
\end{figure}

\section{Discussion}
\label{sec:disc}
We introduced a new ID estimator aimed at mitigating some problems commonly affecting existing estimators: the choice of the scale at which the ID should be measured and the dependence of the ID estimate on the noise.
These two issues are strongly intertwined with the verification/enforcement of the statistical framework on which the ID estimator is based.
Starting from a likelihood formulation of the $k$-NN method, we leveraged an algorithm that finds, for each data point, the largest neighbours where the density is approximately constant. 
This, we have shown, allows us to enhance the statistical accuracy of ID estimates.
We remark that the theoretical validity of our method relies on the assumption that the data lie in a neighbourhood of a $C^1$ manifold of constant dimension $d$. Our method cannot be reliably used in pathological cases where the manifold has singularities or spatially varying intrinsic dimension.

To prove the effectiveness of this approach, which we called ABIDE, we tested it on both artificial and real-world datasets. 
The former, in which we had full control and comprehension of the data, allowed us to understand how ABIDE works and have an idea of its capabilities. 
In particular, we addressed its robustness against noise and verified the reliability of its estimates, exploiting the model validation developed in \cite{macocco}.
We then tackled different real-world datasets and, in each one of them, ABIDE managed to deliver insights that could not be obtained with other methods. 
In particular, in all of the cases we considered, the ID does depend on the scale. 
In these conditions, strictly speaking, the question ``What is the ID of this dataset?'' cannot be answered unequivocally. 
Using ABIDE, we can say that a meaningful scale at which the ID should be measured is the largest scale at which the density is constant. 
Clearly, in the presence of noise, the density will be constant even at lower scales, but the ID measured at such scales will be spuriously increased by the noise itself.

To check the potential of the approach introduced with ABIDE, it would be interesting to test the same framework on other suited ID estimators. This is briefly outlined in the Appendix for GRIDE (\citealp{denti}). Another estimator it would be interesting to consider is the maximum likelihood estimator proposed in \cite{levina2004maximum}.

The possibility of finding proper and meaningful neighbourhood sizes can be of great help in many relevant fields of data analysis. In particular, any method relying on $k$-NN statistics would benefit from the algorithm we have introduced in this paper. Our approach naturally extends to virtually any tool where the choice of the nearest-neighbour order to be considered is left to the user as a tuning parameter without being principle-based.
Among such methods, we mention density estimation techniques, where the adaptive neighbourhood was conceived (\citealp{rodriguez}) and has already found some applications (\citealp{carli2021statistically,carli2024densityestimationbinlessmultidimensional}); clustering routines, both those density-based (\citealp{d2021automatic}) and neighbours-graph-based (\citealp{10296014}); projective methods like Isomap (\citealp{isomap}), UMAP (\citealp{umap}) and Locally Linear Embedding (\citealp{lle}).
For all the aforementioned methods, the sole selection of an adaptive neighbourhood size might improve their performances.

A further extension of the proposed method concerns  the case of datasets with heterogeneous ID, similarly to the framework of \cite{Allegra2020} and \cite{denti2023bayesian}.
Finally, after we have grasped the capabilities of the ABIDE estimator in Euclidean spaces, it might be interesting to extend it to discrete metric spaces.

\section*{Acknowledgements}
A.M. acknowledges financial support from SNSF grant 208249.

\appendix
\section*{Appendix}

The following sections are devoted to the derivation of theoretical guarantees for the ABIDE estimator introduced in the paper.
Moreover, we briefly show how the methodology can be extended to its Bayesian counterpart. As a proof of concept, we show that GRIDE, another ID estimator recently proposed by \cite{denti}, can be improved by selecting optimal neighbourhoods of data points. In principle, this could be done for other ID estimators based on NNs.

\section{Theoretical guarantees}
We provide large-sample guarantees regarding the numerical convergence of the proposed algorithm (Proposition \ref{prop:numerical_convergence_new}) and show consistency of the resulting ABIDE estimator (Proposition \ref{prop:consistency}). Moreover, we show asymptotic normality (Proposition \ref{prop:normality}) of the estimator.
\paragraph{Notations and conventions.}
Throughout this section, we keep the notation introduced in the main text and use the following additional conventions.
For events, ``a.s.'' means almost surely.
For random sequences, $Z_n=o_P(1)$ means $Z_n$ converges to 0 in probability.
The Euclidean norm in $\mathbb{R}^D$ is denoted by $\|\cdot\|_2$, and for a set $A\subset\mathbb{R}^D$ we write $\operatorname{dist}(x,A):=\inf_{y\in A}\|x-y\|_2$.
The intrinsic volume measure on $M$ is denoted by $\mathrm{Vol}_M$.
For asymptotic statements regarding ABIDE, we write $d_n^*$ in place of $d^*$ to stress dependence on the sample size $n$.
When needed, we explicitly write $k_i^*(d)$ to emphasize dependence of the selected neighbourhood order on the current iterate $d$.

\begin{assumption}
\label{assump:manifold}
There exists a $d$-dimensional $C^1$ manifold 
$M \subset \mathbb{R}^D$ with $d < D$ such that
$$
X_i \in M_\epsilon := \{y \in \mathbb{R}^D : \operatorname{dist}(y,M) \le \epsilon\}
\quad \text{a.s.}
$$
for all $i=1,\dots,n$ and some $\epsilon > 0$ small enough.
\end{assumption}

\begin{assumption}
\label{assump:data}
The observations $X_1,\dots,X_n$ are iid random variables taking values in $\mathbb{R}^D$ and drawn from a common distribution $P_X$ that is absolutely continuous with respect to $\mathrm{Vol}_M$ on $M$.
\end{assumption}

We now derive the main results regarding the numerical convergence of Algorithm \ref{alg:adaptive}, as well as the consistency and asymptotic normality of the proposed ID estimator.
We start with the following proposition regarding the numerical convergence of Algorithm \ref{alg:adaptive}.

\begin{proposition}
\label{prop:numerical_convergence_new}
Under Assumptions \ref{assump:manifold}-\ref{assump:data}, for every tolerance $\delta>0$, Algorithm \ref{alg:adaptive} terminates with probability tending to 1 as $n\to\infty$.
\end{proposition}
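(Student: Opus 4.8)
The plan is to show that the map $d \mapsto T_n(d)$ defined by one iteration of Algorithm~\ref{alg:adaptive} — namely $T_n(d) := \log\bigl((\sum_i k_{A,i}^*(d))/(\sum_i k_{B,i}^*(d))\bigr)/\log\tau(d)$ with $\tau(d)=c_*^{1/d}$ — is, with probability tending to $1$, a contraction on a compact interval $[d_-,d_+]\ni d$ containing the true dimension $d$, so that the fixed-point iteration converges geometrically and in particular the stopping criterion $|d_{\mathrm{current}}-d_{\mathrm{next}}|<\delta$ is met after finitely many steps. The termination claim then follows because a geometrically convergent sequence is Cauchy, so for any $\delta>0$ consecutive iterates eventually differ by less than $\delta$.

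First I would establish a deterministic (population-level) limit for the one-step map. Under Assumptions~\ref{assump:manifold}--\ref{assump:data}, for a fixed current iterate $d$ the selected $k_i^*(d)$ are determined by the density-change test run with volumes computed in dimension $d$; as $n\to\infty$ the empirical ratio $(\sum_i k_{A,i}^*(d))/(\sum_i k_{B,i}^*(d))$ concentrates (by a law of large numbers over the $n$ points, using that the $k_i^*$ are bounded by $k_{\max}$ and the $k_{A,i}^*$ are conditionally Binomial given $k_{B,i}^*$) around its expectation, which by the local manifold approximation equals $\tau(d)^{d_0}$ up to an $o(1)$ curvature/noise error, where $d_0$ is the true ID. Hence $T_n(d) \to T_\infty(d) := d_0 \,\log\tau(d)/\log\tau(d) \cdot (\text{correction})$; the key point is that $\log\tau(d) = \tfrac{1}{d}\log c_*$, so the ratio's logarithm is $\tfrac{d_0}{d}\log c_*$ and dividing by $\log\tau(d)=\tfrac1d\log c_*$ gives $T_\infty(d) = d_0$ exactly in the idealized case. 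Thus the population map is a \emph{constant} map equal to $d_0$, whose derivative is $0$, trivially a contraction; the real map $T_n$ is a small perturbation of it.

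Next I would make this quantitative: show $\sup_{d\in[d_-,d_+]}|T_n'(d)| \le \tfrac12$ with probability $\to 1$. This requires (i) uniform-in-$d$ control of the fluctuations of the count ratio, via a uniform LLN / stochastic equicontinuity argument over the one-parameter family $d\mapsto k_i^*(d)$ — here one uses that $k_i^*(d)$ is a piecewise-constant, monotone-in-a-suitable-sense function of $d$ taking values in $\{1,\dots,k_{\max}\}$, so the family has bounded complexity; and (ii) a bound on the derivative of the smooth part, $d\mapsto \tau(d)^{d_0}$ composed with $\log(\cdot)/\log\tau(d)$, which is $O(1)$ uniformly on the compact interval. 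Combining, $T_n$ differs from the constant map $d_0$ by an $o_P(1)$ term with $o_P(1)$ derivative, giving the contraction. By Banach's fixed point theorem on $[d_-,d_+]$ (noting $T_n$ maps the interval into itself for large $n$ since $T_n(d)\approx d_0$ is interior), the iterates converge geometrically to the unique fixed point, hence form a Cauchy sequence, hence the $\delta$-stopping rule triggers in finitely many iterations on the event of probability $\to 1$.

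The main obstacle I expect is step (i): controlling $d\mapsto k_i^*(d)$ uniformly. The selected neighbourhood order is defined through a sequence of $\chi^2_1$-type likelihood-ratio tests whose statistics $D_{i,k}$ depend on $d$ only through the volumes $V_{i,k}=\Omega_d r_{i,k}^d$, and in fact in the form entering $D_{i,k}$ the $\Omega_d$ cancels and only $r_{i,k}^d$ matters — so $D_{i,k}$ is a smooth, monotone function of $d$ for fixed data, and $k_i^*(d)$ is the first crossing time of a fixed threshold, hence a càdlàg step function of $d$ with at most $k_{\max}$ jumps. This structure should make a uniform LLN routine, but one must be careful that near a jump point of some $k_i^*(\cdot)$ the map $T_n$ is only piecewise-smooth; I would handle this by noting the jumps are a lower-order perturbation (each changes the ratio by $O(1/n)$) and that the limiting map is still the constant $d_0$, so the contraction estimate survives. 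A secondary technical point is quantifying the curvature- and noise-induced bias $\epsilon$ in the $p\approx\tau^{d_0}$ approximation and ensuring it is $o(1)$ uniformly over the relevant scales $r_{i,k_i^*}$; this is where Assumption~\ref{assump:manifold} (tubular neighbourhood of width $\epsilon$) and the $C^1$ regularity are used.
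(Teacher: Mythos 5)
Your proposal is correct and follows essentially the same route as the paper's proof: both recast Algorithm~\ref{alg:adaptive} as the fixed-point iteration $d_{m+1}=\log\widetilde g(d_m)/\log g(d_m)$, show via concentration of the conditionally-binomial counts and the local volume approximation $\mathrm{Vol}_M(B(x,\tau r))/\mathrm{Vol}_M(B(x,r))=\tau^{d_0}(1+o(1))$ that the count ratio behaves like $\tau(d)^{d_0}$, so the one-step map converges (uniformly, exploiting the piecewise-constant structure of $d\mapsto k_i^*(d)$, which is the paper's partition $\mathcal{G}^i_{h_i}$) to the constant map $d\mapsto d_0$, and conclude by a fixed-point argument. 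The only differences are cosmetic: you propose a derivative bound to obtain an explicit contraction constant, whereas the paper settles for uniform closeness to a constant map (which already forces consecutive iterates within $\delta$ of each other) and invokes Janson's Hoeffding-type inequality and the Lebesgue differentiation theorem where you invoke a uniform LLN.
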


\begin{proof}
    Let us consider the function $\widetilde{g}: \mathbb{R}_+ \to \mathbb{R}$ defined as $$d\mapsto \frac{\frac{1}{n}\sum_{i=1}^n{k_{A,i}^*(d)}}{\frac{1}{n}\sum_{i=1}^n{k_{B,i}^*(d)}}$$ and note that Algorithm \ref{alg:adaptive} is the fixed point iteration given by 
     \begin{equation}
     \label{eq:fixed_point_iteration}
         d_{m+1}=\frac{\log (\widetilde{g}(d_m))}{\log(g(d_m))}:=G(d_m)
     \end{equation}
     where $g:\mathbb{R}_+\to \mathbb{R}_+$ is the function given by $d\mapsto c_*^{1/d}$ and $G:\mathbb{R}_+ \to \mathbb{R}_+$.
    The aim is to employ a (version of) the fixed point theorem to show that the sequence of iterations \eqref{eq:fixed_point_iteration} converges to a fixed point.
    Let us consider a family of Borel sets $(\mathcal{G}_{h_i}^i)_h \subset [0,\infty)$ given by 
     $$\mathcal{G}_{h_i}^i=\{d\in [0,\infty): k_i^*(d)=h_i\},$$ 
     note that $(\mathcal{G}_{h_i}^i)_{h_i}$ constitutes a partition of $[0,\infty)$. Now, let $d_1\neq d_2$, clearly in the trivial case where for all $i=1,\dots,n$ we have $d_1, d_2 \in \mathcal{G}_{h_i}^i$ then the map $G$ is a contraction mapping because it is constant. On the other hand, when $d_1 \in \mathcal{G}_{h_i}^i$ and $d_2 \in \mathcal{G}_{l_i}^i$ with $h_i\neq l_i$ we have that $G$ is not guaranteed to be a contraction mapping, and we require a different argument.
     It is not restrictive to write the cumulative intensity on $B(x_i,r_{i,k_{A,i}^*})$ as $\rho_{A,i} V_{i,k_{A,i}^*}$ for some $\rho_{A,i}$, and the cumulative intensity on $B(x_i,r_{i,k_{B,i}^*})$ as
     $\rho_{B,i}V_{i,k_{B,i}^*}$
     for some $\rho_{B,i}$. Note that, by construction, $t_{A,i}(k_i^*) = \tau \, r_{i,k_i^*}$ and $t_{B,i}(k_i^*) = r_{i,k_i^*}$, and, by Assumption~\ref{assump:data}, since $k_i^* \le k_{\max}$, standard NN theory (Lemma 2.2 in \citealp{biau2015lectures}) gives $r_{i,k_i^*} \to 0$ in probability. Moreover, by Assumption~\ref{assump:manifold}, $\mathrm{Vol}_M(B(x, \tau r))/\mathrm{Vol}_M(B(x, r)) = \tau^d(1+o(1))$ as $r\to 0$. Therefore, it follows that
     \begin{align}
     \label{eq::tau}
     \frac{ \rho_{A,i} V_{i,k_{A,i}^*}}{ \rho_{B,i}V_{i,k_{B,i}^*}}
     = \tau^d\frac{\rho_{A,i}}{\rho_{B,i}}(1+o_P(1))
     = \tau^d\frac{\rho_{A,i}}{\rho_{B,i}} + o_P(1).
     \end{align}
     Now, since $k_{A,i}^*$ and $k_{B,i}^*-k_{A,i}^*$ are independent by construction, by explicit computation we have that 
     \begin{align*}
          p_{k_{A,i}^*|k_{B,i}^*}( x|y)&= \frac{p_{k_{A,i}^*}(x) p_{k_{B,i}^*-k_{A,i}^*}(y-x)}{p_{k_{B,i}^*}(y)}\\
          &=\frac{y!}{x!(y-x)!} \frac{(\rho_{A,i} V_{i,k_{A,i}^*})^x(\rho_{B,i}V_{i,k_{B,i}^*}-\rho_{A,i} V_{i,k_{A,i}^*})^{y-x}}{ (\rho_{B,i}V_{i,k_{B,i}^*})^{y}}\frac{(\rho_{B,i}V_{i,k_{B,i}^*})^{-x}}{(\rho_{B,i}V_{i,k_{B,i}^*})^{-x}}\\
          &=\binom{y}{x} \Big(\frac{\rho_{A,i}V_{i,k_{A,i}^*}}{\rho_{B,i}V_{i,k_{B,i}^*}}\Big)^{x}\Big(1-\frac{\rho_{A,i}V_{i,k_{A,i}^*}}{\rho_{B,i}V_{i,k_{B,i}^*}}\Big)^{y-x}.
     \end{align*}
     Therefore, using \eqref{eq::tau}, it follows that 
     $$
     k_{A,i}^*|k_{B,i}^* (d_1) \sim \mathrm{Binomial}\Big(k^*_{B,i}(d_1), \tau^d \frac{ \rho_{A,i}}{\rho_{B,i}}\Big), 
     $$
     with probability tending to 1, for some $d_1 \in [0,\infty)$.
     As a consequence, defining $\widetilde k^*(d_1):= \underset{i=1,\dots,n}{\max} \{k^*_{B,i}(d_1)\}+1$, the version of Hoeffding's inequality introduced in \cite{janson2004large} implies
    that on an event with probability at least $1-\gamma$,
    $$
   \Big|\frac{1}{n}\sum_{i=1}^n k_{A,i}^*|k_{B,i}^*(d_1)- \frac{1}{n} \sum_{i=1}^n k^*_{B,i}(d_1) \tau^{d} \frac{ \rho_{i,A}}{\rho_{B,i}}\Big|\leq C_\gamma \frac{\sqrt{\widetilde k^*(d_1)\sum_{i=1}^n k_{B,i}^*(d_1)^2} } {n},
   $$
   for every $\gamma>0$.
   Combining the inequality in the previous display with the fact that $k_i^*\leq k_{max}$ for all $i=1,\dots,n$, it follows that, with high probability, the average $\frac{1}{n}\sum_{i=1}^n k_{A,i}^*|k_{B,i}^*(d_1)$ lies in a shrinking neighbourhood around its expectation.
   Now, note that, as $n\to \infty$, by Assumption~\ref{assump:data} and the Lebesgue differentiation theorem, $ \rho_{A,i}/\rho_{B,i}\to 1$ in probability for all $i=1,\dots,n$. Thus, recalling that $\tau:= g(d_1)$, it follows
  \begin{align*}
      \widetilde{g}(d_1)&= \frac{\frac{1}{n}\sum_{i=1}^n k_{A,i}^*|k_{B,i}^*(d_1)}{\frac{1}{n}\sum_{i=1}^n k^*_{B,i}(d_1)}=\frac{\frac{1}{n} \sum_{i=1}^n k^*_{B,i}(d_1) \tau^{d} + o_P(1) }{\frac{1}{n}\sum_{i=1}^n k^*_{B,i}(d_1)}  = g(d_1)^d + o_P(1).
  \end{align*}
   Therefore, $\log  \widetilde{g}$ possesses the same asymptotic behaviour of $ d \log  g $, and $G$ converges uniformly in probability to a constant contraction mapping with fixed point equal to $d$.
   The desired result follows by employing (a version of) the fixed point theorem (see \citealp{kirk2002handbook}), which implies that the sequence of iterations \eqref{eq:fixed_point_iteration} converges to a fixed point with probability tending to 1.
\end{proof}

Since $G$ tends to a constant function, we expect that 2 iterations are enough to obtain stabilization of Algorithm \ref{alg:adaptive} at a given tolerance, and this is confirmed by the numerical experiments presented in the paper. Moreover, we remark that the regularity of the intensity of the Poisson process can improve the numerical convergence of the algorithm. In other words, when $h$ is locally well approximated by a constant function, termination of Algorithm \ref{alg:adaptive} holds for much smaller sample sizes.
Now we derive consistency for the estimator $d^*_n$.
\begin{proposition}
\label{prop:consistency}
Under Assumptions \ref{assump:manifold}-\ref{assump:data}, $d^*_n$ converges in probability to $d$ as $n\to \infty$.
\end{proposition}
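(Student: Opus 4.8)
The plan is to build directly on the asymptotic analysis already carried out in the proof of Proposition \ref{prop:numerical_convergence_new}. There we showed that the fixed-point map $G$ converges uniformly in probability to the constant map $d \mapsto d$ (the true intrinsic dimension), with fixed point equal to $d$; in particular the iteration \eqref{eq:fixed_point_iteration} converges, with probability tending to $1$, to a point $\widehat d_n$ satisfying $\widehat d_n = G(\widehat d_n)$, and this fixed point is $o_P(1)$-close to $d$. Since Algorithm \ref{alg:adaptive} outputs $d^*_n = d_{\mathrm{next}}$ at the last iterate (and by Proposition \ref{prop:numerical_convergence_new} it terminates, w.p.\ tending to $1$, once $|d_{\mathrm{current}} - d_{\mathrm{next}}| < \delta$), the estimator $d^*_n$ is within $\delta$ of that limiting fixed point. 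Taking $\delta \to 0$ is not available at the level of a single run, so instead I would phrase the argument as: for any $\varepsilon > 0$, on the high-probability event from the proof of Proposition \ref{prop:numerical_convergence_new} one has $|G(d) - d| < \varepsilon/2$ uniformly over the compact range of iterates $[\,0, k_{\max}\,]$-induced interval, hence every fixed point of $G$ lies in $(d-\varepsilon/2, d+\varepsilon/2)$, and the algorithm's output, being a $\delta$-approximate fixed point with $\delta < \varepsilon/2$, lies in $(d-\varepsilon, d+\varepsilon)$.

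Concretely, the key steps in order are as follows. First, record from the earlier proof the uniform-in-probability expansion $\widetilde g(d) = g(d)^{d}(1 + o_P(1))$, equivalently $\log \widetilde g(d) = d \log g(d) + o_P(1)$, which holds uniformly over $d$ in the relevant bounded set because $k^*_i(d) \le k_{\max}$ for all $i$ and the ratios $\rho_{A,i}/\rho_{B,i} \to 1$ in probability (Assumption \ref{assump:data} and the Lebesgue differentiation theorem). Second, since $\log g(d) = (\log c_*)/d$ is bounded away from $0$ on any interval bounded away from $0$ and $\infty$, divide to get $G(d) = \log \widetilde g(d)/\log g(d) = d + o_P(1)$ uniformly. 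Third, invoke the quantitative fixed-point statement already used (the version in \citealp{kirk2002handbook}): when $G$ is uniformly close to the identity's fixed point $d$, all fixed points of $G$ concentrate near $d$, and the iteration converges to one of them. Fourth, combine with the termination criterion: the returned $d^*_n$ differs from the limiting fixed point by at most $\delta$, so $|d^*_n - d| \le \delta + o_P(1)$, and since $\delta$ is an arbitrary prescribed tolerance we conclude $d^*_n \xrightarrow{P} d$ by first sending $n \to \infty$ and then $\delta \to 0$ (or simply fixing $\delta$ small relative to the target $\varepsilon$).

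The main obstacle I anticipate is making the uniformity rigorous: the $o_P(1)$ terms in $\widetilde g(d)$ must be controlled uniformly over $d$, not just pointwise, and the partition $(\mathcal G^i_{h_i})_{h_i}$ of $[0,\infty)$ into regions of constant $k^*_i(\cdot)$ means $G$ is only piecewise smooth, with finitely many jump locations (at most $k_{\max}$ per point). One must argue that these jumps shrink to $0$ in the limit — this is where $r_{i,k^*_i} \to 0$ in probability and the volume-ratio expansion $\mathrm{Vol}_M(B(x,\tau r))/\mathrm{Vol}_M(B(x,r)) = \tau^d(1+o(1))$ from Assumption \ref{assump:manifold} do the work, exactly as in the previous proof — so that the piecewise-constant structure does not obstruct the contraction-to-a-constant behaviour. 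A secondary point to be careful about is confirming that the iterates stay in a fixed compact subset of $(0,\infty)$ with high probability (needed so $\log g$ stays bounded away from $0$), which follows since the count ratio $(\sum_i k^*_{A,i})/(\sum_i k^*_{B,i})$ is bounded away from $0$ and $1$ with high probability under the regularity afforded by Assumption \ref{assump:data}. Once these uniformity points are in hand, the consistency conclusion is immediate from the fixed-point convergence.
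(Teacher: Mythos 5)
Your proposal is correct and follows essentially the same route as the paper's proof: both exploit the expansion $\log\widetilde g(d)=d\log g(d)+o_P(1)$ from Proposition \ref{prop:numerical_convergence_new} to conclude $G(d_n^*)=d+o_P(1)$, combine this with the fact that the returned $d_n^*$ is a $\delta$-approximate fixed point of $G$ at termination, and then let $\delta\to 0$. Your added remarks on uniformity over the iterates and on the piecewise structure of $k_i^*(\cdot)$ are refinements of details the paper leaves implicit, not a different argument.
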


\begin{proof}
    Using the arguments given in the proof of Proposition \ref{prop:numerical_convergence_new}, it follows that
    for any $\epsilon>0$ and $\delta >0$, as $n\to\infty$,
     \begin{align*}
         P\Big(|d_n^*-d|>\epsilon\Big)&\leq P\Big(\{|G(d_n^*)+\delta-d|>\epsilon\}\Big)=P\Big(\Big|\frac{\log (\widetilde{g}(d_n^*))}{\log(g(d_n^*))}+\delta -d\Big|>\epsilon\Big)\\
         &= P\Big(\Big|\frac{ d \log ({g}(d_n^*))}{\log(g(d_n^*))}+\delta-d\Big|>\epsilon\Big)+o(1).
     \end{align*}
It suffices to take $\delta \to 0$ and the thesis follows.
\end{proof}

The next Proposition is a qualitative result concerning the asymptotic normality of the proposed estimator.
\begin{proposition}
\label{prop:normality}
     Under Assumptions \ref{assump:manifold}-\ref{assump:data}, there exist real constants $c,C$ such that $Z_n=\sqrt{n c I(d^*_n)}(d^*_n-d+C)$ converges in distribution to $Z \sim \mathcal{N}(0,1)$ as $n\to \infty$.
\end{proposition}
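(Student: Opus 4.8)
The plan is to prove asymptotic normality by exhibiting $d^*_n$ as (asymptotically) a smooth function of sample averages of bounded, independent random variables, and then applying a central limit theorem together with the delta method. Recall from the proof of Proposition \ref{prop:numerical_convergence_new} that at termination $d^*_n$ is a fixed point of the map $G$, so $d^*_n = \log(\widetilde g(d^*_n))/\log(g(d^*_n))$. Writing $S_A := \tfrac1n\sum_{i=1}^n k_{A,i}^*$ and $S_B := \tfrac1n\sum_{i=1}^n k_{B,i}^*$, we have $d^*_n = \log(S_A/S_B)/\log(g(d^*_n))$, and since $\log g(d) = (\log c_*)/d$ this is an implicit relation. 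The first step is therefore to solve it: because $\tau = c_*^{1/d^*_n}$ is itself evaluated at $d^*_n$, one unfolds the algorithm and works with the explicit BIDE estimator $\widehat d(\tau) = \log(S_A/S_B)/\log\tau$ at the converged $\tau$, treating $\tau$ as fixed at its limiting value $c_*^{1/d}$ up to an $o_P(1/\sqrt n)$ correction (which is where the constant $C$ and the rescaling constant $c$ will be absorbed).

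The second step is the CLT input. From the proof of Proposition \ref{prop:numerical_convergence_new} we already know $k_{A,i}^*\mid k_{B,i}^* \sim \mathrm{Binomial}(k_{B,i}^*, \tau^d \rho_{A,i}/\rho_{B,i})$ with probability tending to $1$, that $\rho_{A,i}/\rho_{B,i}\to 1$, and that $k_i^*\le k_{\max}$ uniformly, so the summands are uniformly bounded. Conditionally on the $k_{B,i}^*$ and on the event that the binomial representation holds, the centred sum $\sqrt n\big(S_A - S_B\,\tau^d\big)$ has a conditional variance of the form $\tfrac1n\sum_i k_{B,i}^*\,\tau^d(1-\tau^d)(1+o_P(1))$, which converges to a deterministic constant times $\mathbb{E}[k_B^*]$; a Lindeberg–Feller or bounded-difference martingale CLT then gives conditional asymptotic normality, and since the limiting variance is nonrandom this upgrades to unconditional asymptotic normality of $\sqrt n(S_A - S_B\tau^d)$. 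One then checks that $\sqrt n(S_B - \mathbb{E}[k_B^*])$ is also asymptotically normal (again bounded summands), so the pair $(S_A, S_B)$ is jointly asymptotically normal around $(\mathbb{E}[k_B^*]\tau^d, \mathbb{E}[k_B^*])$.

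The third step is the delta method applied to the map $(s_A,s_B)\mapsto \log(s_A/s_B)/\log\tau$, which is smooth near the limit point $(\mathbb{E}[k_B^*]\tau^d,\mathbb{E}[k_B^*])$ since both coordinates are bounded away from $0$. This yields $\sqrt n(\widehat d(\tau) - d)\Rightarrow \mathcal N(0,\sigma^2)$ for an explicit $\sigma^2$ proportional to $(1-\tau^d)/(\tau^d (\log\tau)^2 \mathbb{E}[k_B^*])$, i.e.\ exactly the reciprocal of the population version of the observed Fisher information $I(d^*_n)$ in \eqref{eq:fisher} up to a universal constant $c$; by consistency (Proposition \ref{prop:consistency}) and the continuous mapping theorem $nc\,I(d^*_n)$ converges in probability to $1/\sigma^2$, so Slutsky's lemma delivers $\sqrt{ncI(d^*_n)}(\widehat d(\tau)-d)\Rightarrow \mathcal N(0,1)$. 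The remaining bias term $C$ arises from two sources that the statement deliberately leaves as a free additive constant: the $C^1$-manifold curvature correction in $\mathrm{Vol}_M(B(x,\tau r))/\mathrm{Vol}_M(B(x,r)) = \tau^d(1+o(1))$, whose $o(1)$ is only $o_P(1)$ and need not be $o_P(1/\sqrt n)$, and the replacement of the implicit $\tau=c_*^{1/d^*_n}$ by its limit; collecting these into $C$ and noting that replacing $\widehat d(\tau)$ by $d^*_n$ costs only $o_P(1/\sqrt n)$ by the fixed-point contraction argument of Proposition \ref{prop:numerical_convergence_new} finishes the proof.

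The main obstacle I expect is handling the dependence introduced by the nearest-neighbour construction: the $k_{A,i}^*, k_{B,i}^*$ are \emph{not} independent across $i$ because neighbourhoods overlap, and the binomial conditional law only holds on a high-probability event, not identically. The cleanest route is to condition on the ``shell-volume'' sigma-field that makes the $v_{i,j}$ (hence the within-point counts) conditionally exponential/Poisson and the across-point counts conditionally independent in the Poisson-process idealisation, prove the conditional CLT there with the bounded-difference method, and then argue that the idealisation and the empirical sample differ by $o_P(1)$ in the relevant averages (as is already done qualitatively in the proof of Proposition \ref{prop:numerical_convergence_new}, citing \citealp{janson2004large} and the remark in \citealp{macocco} that the independence violation is negligible when $\tfrac1n\sum_i k_{A,i}^*\ll n$). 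Making the variance stabilisation precise — i.e.\ that $\tfrac1n\sum_i k_{B,i}^*$ and $\tfrac1n\sum_i (k_{B,i}^*)^2$ converge in probability to deterministic limits under Assumptions \ref{assump:manifold}--\ref{assump:data} — is the technical heart, and is why the proposition is stated qualitatively, with the constants $c$ and $C$ left implicit.
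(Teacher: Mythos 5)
Your argument reaches the right conclusion but by a genuinely different route from the paper. The paper's proof is a two-line appeal to general theory: it notes that Proposition \ref{prop:numerical_convergence_new} makes $d^*_n$ bounded in probability and then invokes the classical asymptotics of M-estimators \citep{van2000asymptotic} together with the central limit theorem for dependent variables of \cite{baldi1989normal} to absorb the correlations induced by overlapping neighbourhoods. You instead build the limit from scratch: you linearise the fixed-point relation $d^*_n=\log(\widetilde{g}(d^*_n))/\log(g(d^*_n))$, prove a conditional CLT for the bounded binomial counts via a bounded-difference/Lindeberg argument after conditioning on a shell-volume sigma-field, and finish with the delta method applied to $(s_A,s_B)\mapsto\log(s_A/s_B)/\log\tau$ plus Slutsky. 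Your route buys an explicit identification of the limiting variance with the reciprocal of the Fisher information in \eqref{eq:fisher} and a concrete account of where the nuisance constants $c$ and $C$ come from (curvature, the implicit $\tau$, residual dependence), none of which the paper's proof supplies; the paper's route buys brevity and an off-the-shelf treatment of the cross-$i$ dependence, which is precisely the step you flag as the technical heart and handle only by a conditioning heuristic. One caveat: the contraction argument of Proposition \ref{prop:numerical_convergence_new} gives no rate, so your claim that replacing the implicit $\tau=c_*^{1/d^*_n}$ by its limit costs only $o_P(1/\sqrt n)$ is not actually delivered by that proposition --- but since the statement leaves $c$ and $C$ free and is avowedly qualitative, this does not undermine your sketch any more than the corresponding gaps in the paper's own proof.
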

\begin{proof}
Owing to Proposition \ref{prop:numerical_convergence_new} $d^*_n$ is bounded in probability, therefore, asymptotic normality follows directly from the classical properties of M-estimators \citep{van2000asymptotic} combined with the version of the Central Limit Theorem for dependent random variables of \cite{baldi1989normal}.
\end{proof}

Numerical experiments show that the correction constants $c, C$ (due to dependencies and curvature) can be taken equal to 1 with no significant loss of accuracy when approximating the limiting distribution; see Figure \ref{fig:normality}.

\section{Bayesian version of ABIDE}
The Bayesian version of the proposed approach is a straightforward extension of the Bayesian estimator presented in the supplementary material of \cite{macocco}. Here we briefly outline it and show how the adaptive version can be obtained. By a conjugacy argument, consider the following prior, $p=\tau ^d\sim \mathrm{Beta}(\alpha_0,\beta_0)$, 
and derive the corresponding posterior distribution
\begin{equation*}
    p|k_{B,1},\dots, k_{B,n} \sim \mathrm{Beta}(\alpha, \beta)
\end{equation*}
where $\alpha=\alpha_0+\sum_{i=1}^nk_{A,i}$ and $\beta =\beta_0+\sum_{i=1}^n (k_{B,i}-k_{A,i})$. In our adaptive framework, we have instead the following posterior distribution
\begin{equation*}
    p|k_{B,1}^*,\dots, k_{B,n}^* \sim \mathrm{Beta}(\alpha^*, \beta^*)
\end{equation*}
where $\alpha^*=\alpha_0+\sum_{i=1}^nk_{A,i}^*$ and $\beta^* =\beta_0+\sum_{i=1}^n(k_{B,i}^*-k_{A,i}^*)$. Therefore, omitting the conditioning on $k_{B,1}^*,\dots, k_{B,n}^*$ for notational convenience, the posterior density of the ID, derived by a change of variable argument, is proportional to 
\begin{equation*}
    f_d(x) \propto  \tau^{x(\alpha^*-1)}(1-\tau^x)^{\beta^*-1} |\tau ^x \log \tau|, 
\end{equation*}
where the posterior expectation and the posterior variance can be derived using the well-known fact that for $X\sim \mathrm{Beta}(\alpha,\beta)$ we have 
$$\operatorname{E}[X]=\psi_0(\alpha)-\psi_0(\alpha+\beta),\quad \operatorname{Var}[X]={\psi_1(\alpha)-\psi_1(\alpha+\beta)},$$
where $\psi_0(z)=\frac{d}{dz}\log \Gamma(z)$ and $\psi_1(z)=\frac{d^2}{dz^2}\log \Gamma(z)$ are the digamma and the trigamma functions respectively.
Thus, we obtain the posterior expectation and the posterior variance
\begin{equation}
\label{eq:post-mean-var}
    \operatorname{E}[d] 
    =\frac{\psi_0(\alpha^*)-\psi_0(\alpha^*+\beta^*)}{\log \tau}, \quad \operatorname{Var}[d]=\frac{\psi_1(\alpha^*)-\psi_1(\alpha^*+\beta^*)}{(\log\tau)^2}.
\end{equation}
Therefore, adopting a squared error loss function, $\operatorname{E}[d]$ is a proper Bayesian estimator of the ID and, it gives rise to a Bayesian version of Algorithm \ref{alg:adaptive} obtained setting $d_\text{next}=\operatorname{E}[d]$. This is named the \emph{Bayesian Adaptive Binomial ID Estimator} (BABIDE). Now we show that BABIDE inherits the theoretical guarantees of ABIDE.

\begin{proposition}
    \label{prop:BABIDE}
    BABIDE is asymptotically equivalent to ABIDE.
\end{proposition}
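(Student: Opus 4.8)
The plan is to show that the Bayesian posterior mean $\operatorname{E}[d]$ used in BABIDE and the maximum-likelihood estimate $\widehat{d}$ used in BIDE (and hence ABIDE) differ by a term that vanishes as $n\to\infty$, so that the two fixed-point iterations have the same asymptotic limit and the same large-sample distribution. Concretely, I would work with the closed form in \eqref{eq:post-mean-var}: at a given iterate $d$ (with $\tau=c_*^{1/d}$ fixed), we have $\operatorname{E}[d]=(\psi_0(\alpha^*)-\psi_0(\alpha^*+\beta^*))/\log\tau$ with $\alpha^*=\alpha_0+\sum_i k_{A,i}^*$ and $\beta^*=\beta_0+\sum_i(k_{B,i}^*-k_{A,i}^*)$, while BIDE gives $\widehat d=\log((\sum_i k_{A,i}^*)/(\sum_i k_{B,i}^*))/\log\tau$.

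The key step is the asymptotic expansion of the digamma function: $\psi_0(z)=\log z-\tfrac{1}{2z}+O(z^{-2})$ as $z\to\infty$. Since $\sum_i k_{B,i}^*\to\infty$ a.s. (each $k_{B,i}^*\ge 1$ and $n\to\infty$), both $\alpha^*$ and $\alpha^*+\beta^*$ diverge, and applying the expansion gives
\begin{equation*}
\psi_0(\alpha^*)-\psi_0(\alpha^*+\beta^*)=\log\frac{\alpha^*}{\alpha^*+\beta^*}+O\!\Big(\frac{1}{\sum_i k_{B,i}^*}\Big).
\end{equation*}
Dividing by the fixed $\log\tau$ and noting that $\alpha_0,\beta_0$ are fixed constants while $\sum_i k_{A,i}^*$ and $\sum_i k_{B,i}^*$ both grow (proportionally, by the regularity argument already used in the proof of Proposition~\ref{prop:numerical_convergence_new}), one gets $\operatorname{E}[d]=\widehat d+o_P(1)$ uniformly over the relevant range of $d$. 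Hence the Bayesian update map $G_{\mathrm{Bayes}}$ converges uniformly in probability to the same constant contraction mapping with fixed point $d$ identified in Proposition~\ref{prop:numerical_convergence_new}, so BABIDE terminates with probability tending to $1$ and $\operatorname{E}[d]\to d$ in probability, exactly as in Propositions~\ref{prop:numerical_convergence_new} and~\ref{prop:consistency}. For the distributional statement, the trigamma expansion $\psi_1(z)=z^{-1}+O(z^{-2})$ shows the posterior variance in \eqref{eq:post-mean-var} matches $1/(nI(d^*))$ up to lower-order terms, so the same M-estimator/CLT argument behind Proposition~\ref{prop:normality} applies verbatim; this is what ``asymptotically equivalent'' is meant to capture.

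The main obstacle I anticipate is making the phrase ``asymptotically equivalent'' precise and then controlling the error uniformly in the iterate $d$ rather than pointwise, since $\tau=c_*^{1/d}$ and the selected counts $k_{A,i}^*(d),k_{B,i}^*(d)$ depend on $d$ through the partition $(\mathcal{G}_{h_i}^i)_{h_i}$; however, because $k_i^*\le k_{\max}$ the counts are uniformly bounded and $\log\tau$ is bounded away from $0$ on any compact set of plausible dimensions, so the $O(1/\sum_i k_{B,i}^*)$ remainder is genuinely uniform. Once that is in hand the result is essentially a corollary of the arguments already given, and I would phrase the conclusion as: the Bayesian fixed-point iteration differs from Algorithm~\ref{alg:adaptive} by an $o_P(1)$ perturbation of the update map, hence inherits Propositions~\ref{prop:numerical_convergence_new}, \ref{prop:consistency} and~\ref{prop:normality}.
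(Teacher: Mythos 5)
Your proposal is correct and uses essentially the same argument as the paper: the asymptotic expansion of the digamma function gives $\psi_0(\alpha^*)-\psi_0(\alpha^*+\beta^*)=\log\bigl(\alpha^*/(\alpha^*+\beta^*)\bigr)+O\bigl(1/\sum_i k_{B,i}^*\bigr)$, and since $\alpha_0,\beta_0$ are fixed this reduces the posterior mean to the BIDE/ABIDE expression up to an $o_P(1)$ term. Your additional remarks on uniformity in the iterate and on the trigamma/variance matching go somewhat beyond the paper's (terser) proof but are consistent with the surrounding discussion of how BABIDE inherits the other propositions.
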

\begin{proof}
    Using the well-known asymptotic approximation of $\psi_0$ given by 
    $$
        \psi_0(z) \sim \log(z) - \sum_{j=1}^\infty \frac{\zeta(1-j)}{z^j}
    $$
    where $\zeta$ is the Riemann Zeta function, we obtain the asymptotic equivalence 
    $$
        \operatorname{E}[d]\sim \frac{\log(\sum_{i=1}^n k_{A,i}^*)-\log(\sum_{i=1}^nk_{B,i}^* )}{\log \tau},
    $$
    which concludes the proof.
\end{proof}

As a consequence of Proposition \ref{prop:BABIDE}, algorithm termination is naturally inherited by Proposition \ref{prop:numerical_convergence_new}.
    Moreover, note that statistical guarantees regarding consistency and asymptotic normality can be directly extended to BABIDE by a posterior concentration argument, and using the classical Bernstein–von Mises theorem (\citealp{lecam}) for the posterior distribution.

\section{Adaptive Generalized Ratios ID Estimator}
Another approach that fits into the same theoretical framework outlined in the paper is proposed in \cite{denti}. 
Let $r_{i,l}$ be the distance between unit $i$ and its $l$-th NN according to a selected metric. Then, let $n_1, n_2$ be two positive integers with $n_2>n_1$ and define the ratio $\mu_{i,n_1,n_2}= r_{i,n_2}/r_{i,n_1}$ which is a.s.\ well-defined for continuous data. For discrete data, duplicated points leading to zero values at the denominator of $\mu_i$, need to be removed. The authors derive the density function of $\mu_{i,n_1,n_2}$:
\begin{equation}
 \label{eq:gride-density}
    f_{\mu_{i,n_1,n_2}}(\mu)
    =\frac{d(\mu^d-1)^{n_2-n_1-1}}{\mu^{d(n_2-1)+1} \mathrm{B}(n_2-n_1,n_1)},  \quad \mu>1
\end{equation}
where $\mathrm{B}(\cdot,\cdot)$ is the Beta function and $d$ is the ID.
In the following, we omit $n_1$ and $n_2$ from the subscript for notational convenience. 
The Authors introduce the \emph{Generalized Ratios ID Estimator} (GRIDE) given by
\begin{align}
\label{eq:gride}
          \widehat d  &= \arg\underset{d\geq 0}{\max}\, \log L(d)= \arg\underset{d\geq 0}{\max}\,\log \prod_{i=1}^n \frac{d(\mu_i^d-1)^{n_2-n_1-1}}{\mu_i^{d(n_2-1)+1} \mathrm{B}(n_2-n_1,n_1)},
\end{align}
where the maximization is carried out numerically.
This approach generalizes the 2NN estimator proposed in \cite{facco2017}, since when $n_1=1$ and $n_2=2$, the density in 
\eqref{eq:gride-density} 
reduces to a $\mathrm{Pareto}(1,d)$ and
$
    \widehat{d}=n/\sum_{i=1}^n \log \mu_i,
$
which is equivalent to the estimator proposed in \cite{facco2017}.
It must be pointed out that, when compared to the GRIDE estimator based on values of  $n_2 > 2$, the 2NN is more likely to satisfy the local homogeneity assumption of the Poisson process 
density because it only uses information up to the scale of the second NN of each statistical unit. On the other hand, it is potentially affected by the presence of noise in the data.
The selection of $n_1$ and $n_2$ is somehow linked to the selection of the radii in the framework of \cite{macocco}, where the choice of the radii induce $k_{A,1}\dots,k_{A,n},k_{B,1}\dots,k_{B,n} $ 
in estimator \eqref{eq:bide}, and this choice is affected by the same trade-off between robustness to noise and local homogeneity.
The estimator in \eqref{eq:gride} is straightforwardly extended to its adaptive version by 
splitting the likelihood in its single unit contributions, plugging in the unit dependent $n_{2,i}=k^*_i$ and, as suggested in \cite{denti}, taking $n_{1,i}= n_{2,i}/2$. This leads to an iterative algorithm similar to ABIDE. We might refer to it as \emph{Adaptive GRIDE} (AGRIDE). Some simulations showed a comparable behaviour of ABIDE and AGRIDE. However, it should be recalled that ABIDE is based on BIDE, which has a closed-form expression and does not require additional numerical routines to be computed. This is a relevant advantage as it makes the adaptive version mathematically more convenient when deriving theoretical guarantees.

\section{Additional experiments}
\subsection{Behaviour of adaptive neighbourhoods}

Figure \ref{fig:2nn_vs_abide} reports a comparison of the distributions of $k^*_i$ (left) and corresponding radii $t_{B,i}(k_i^*)$ (right) obtained using the ID estimated with the 2NN estimator (blue) against those obtained with ABIDE (orange) in reference to Panel E of Figure~\ref{fig:moebius1}. The associated IDs are $\approx 5$ for 2NN and $\approx 2.35$ for ABIDE. One can appreciate how the neighbourhoods become larger together with the lowering of the ID.
This shows how a proper estimation of the ID also affects, non-trivially, the estimation of $k^*_i$ and vice-versa.
The right panel displays the distribution of the distances associated with the selected neighbours. 
One can appreciate how also this distribution is broad, with a long tail and two distinct peaks at small distances. 
The shape of such distributions allows us to understand why the results obtained at fixed uniform radius $t_{B,i}=t_B$ or at fixed uniform neighbourhood extension $k_i=k$ could not provide reliable results for this kind of dataset, where the density is rapidly changing in a non-uniform way across the dataset.

Figure \ref{fig:kstar_vs_stuff} shows the evolution of $\overline{k^*}=\frac{1}{n}\sum_{i=1}^n k_i^*$ (top) and corresponding $\overline{t_A(k^*)}=\frac{1}{n}\sum_{i=1}^n t_{A,i}(k_i^*)$ (bottom) computed on multivariate Gaussian data. In each panel, the parameters that do not vary are fixed at $n=10,000$, $d=5$ 
and $\alpha=0.01$. It is at once apparent that $\overline{k^*}$ scales sub-linearly with the sample size (top-left), and $\overline{t_A(k^*)}$ decreases as $n$ increases (bottom-left), making the neighbourhoods asymptotically smaller and smaller. In the central panels, we generate data with different ID values and compute $\overline{k^*}$ and $\overline{t_A(k^*)}$ using the true ID. It is clear that $\overline{k^*}$ decreases as the ID increases since density variations happen at smaller neighbourhoods, while $\overline{t_A(k^*)}$ increases because higher dimensions imply higher distance between data points. Finally, on the right panels, we show that a larger $\alpha$ leads to smaller neighbourhoods ensuring higher protection against type 2 errors.

\begin{figure}[!htb]
    \centering
    \includegraphics[width=1.0\linewidth]{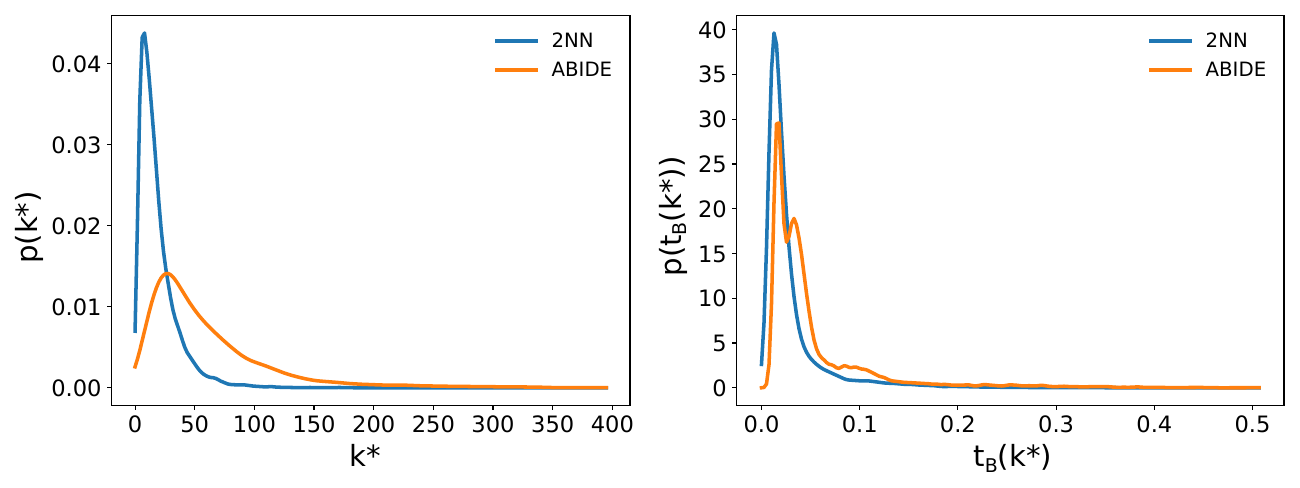}
    \caption{M\"obius dataset: comparison of the distributions of $k^*_i$ (left) and corresponding radii $t_{B,i}(k_i^*)$ (right) obtained using the ID estimated with the 2NN estimator (blue) against those obtained with ABIDE (orange) in reference to Panel E of Figure~\ref{fig:moebius1}.
    }
    \label{fig:2nn_vs_abide}
\end{figure}

\begin{figure}[!htb]
    \centering
    \includegraphics[width=1.0\linewidth]{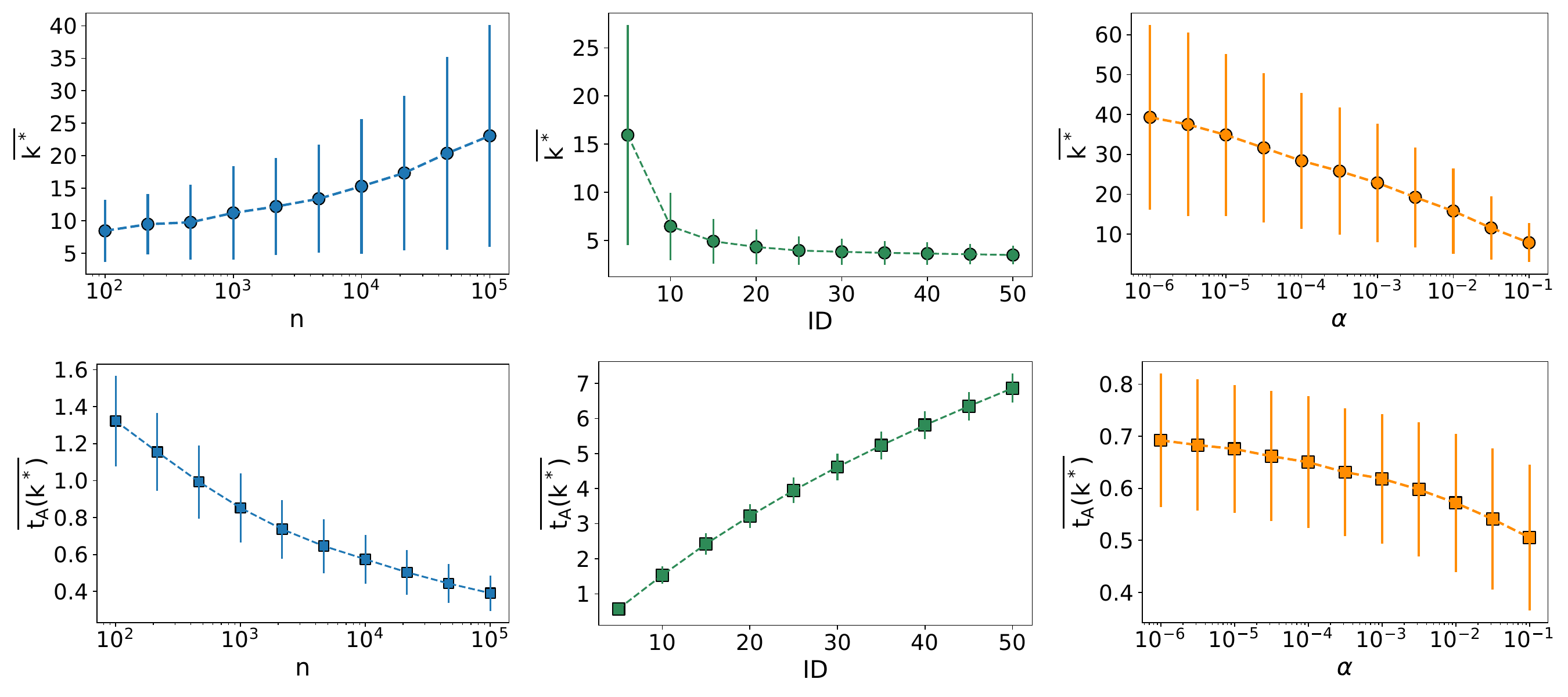}
    \caption{Evolution of $\overline{k^*}=\frac{1}{n}\sum_{i=1}^n k_i^*$ (top) and corresponding $\overline{t_A(k^*)}=\frac{1}{n}\sum_{i=1}^n t_{A,i}(k_i^*)$ (bottom) computed on multivariate Gaussian data. The vertical bars represent the standard deviation of the distributions.
    In each panel, the parameters that do not vary are fixed at $n=10,000$, $d=5$, 
    and $\alpha=0.01$.
    }
    \label{fig:kstar_vs_stuff}
\end{figure}

\subsection{Large sample distribution}
Figure \ref{fig:normality} reports the Monte Carlo distribution of the centred, re-scaled and normalized statistic $\sqrt{n I(d^*_n)}(d^*_n-d)$ for $n=10,000$ based on 500 Monte Carlo independent replicas of a 5-dimensional uniform distribution with periodic boundary conditions. From the figure it is at once apparent that the distribution of $\sqrt{n I(d^*_n)}(d^*_n-d)$ tends to be Gaussian, thus confirming the results of Proposition \ref{prop:normality}.

\begin{figure}[!htb]
    \centering
    \includegraphics[width=0.6\linewidth]{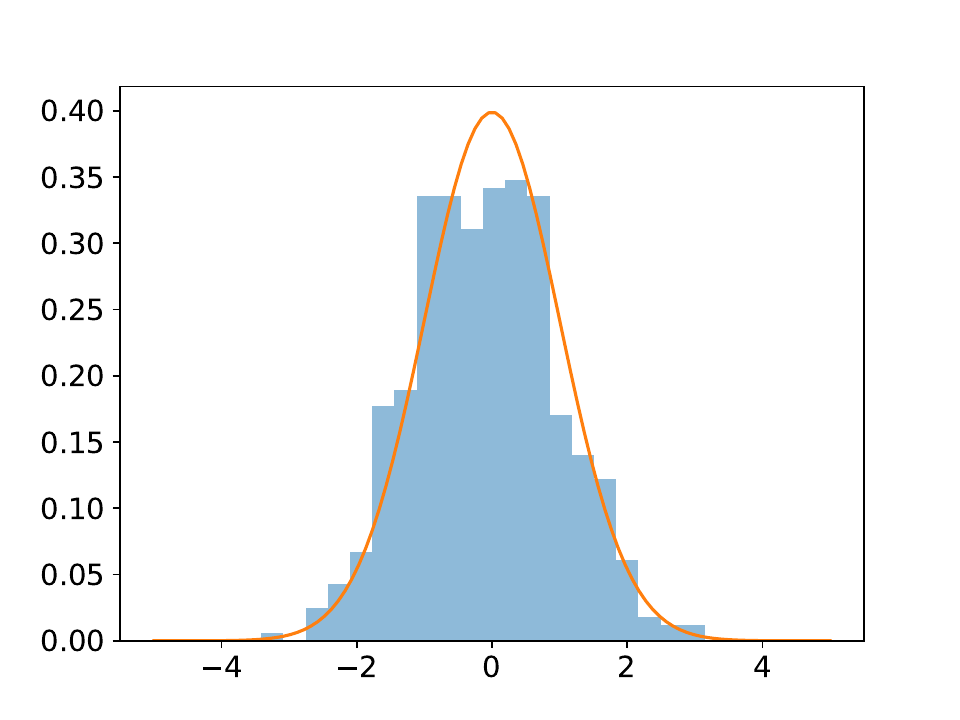}
    \caption{
    Monte Carlo distribution of the centred, re-scaled and normalized statistic $\sqrt{n I(d^*_n)}(d^*_n-d)$ for $n=10,000$ based on 500 Monte Carlo independent replicas of a 5-dimensional uniform distribution with periodic boundary conditions.
    }
    \label{fig:normality}
\end{figure}

\subsection{Selection of the testing threshold and results on execution times}
\label{sec:bonferroni}
One approach to selecting the testing threshold $D_{\text{thr}}$ is to use a Bonferroni-adjusted $(1-\alpha)$ quantile of the $\chi^2_1$ distribution.
Denoting by $h$ the number of sequential tests performed for each data point, we compare the following four choices: 
 $\alpha= 0.01$,  $\alpha= \frac{0.01}{h}$, $\alpha= \frac{0.01}{n}$, and $\alpha= \frac{0.01}{nh}$.
We consider the same setting of Figure~\ref{fig:simul1_noise}, i.e., points sampled from a 2-dimensional Gaussian distribution, embedded in 100 dimensions with added noise.
Results are reported in Figure~\ref{fig:bonferroni}, where we show the ID estimate as a function of the number of data points $n$ (left), and of the noise intensity (right), for the four different choices of $\alpha$. 
In practice, we observe these corrections to be relatively negligible in terms of ID estimates when the sample size is sufficiently large.
However, as a general guideline for practitioners, if there is suspicion of substantial noise, we suggest setting a larger threshold $D_{\text{thr}}$ which corresponds to a smaller $\alpha$, e.g.\ inversely proportional to the number of data points.
\begin{figure}[!htb]
    \centering
    \includegraphics[width=0.45\linewidth]{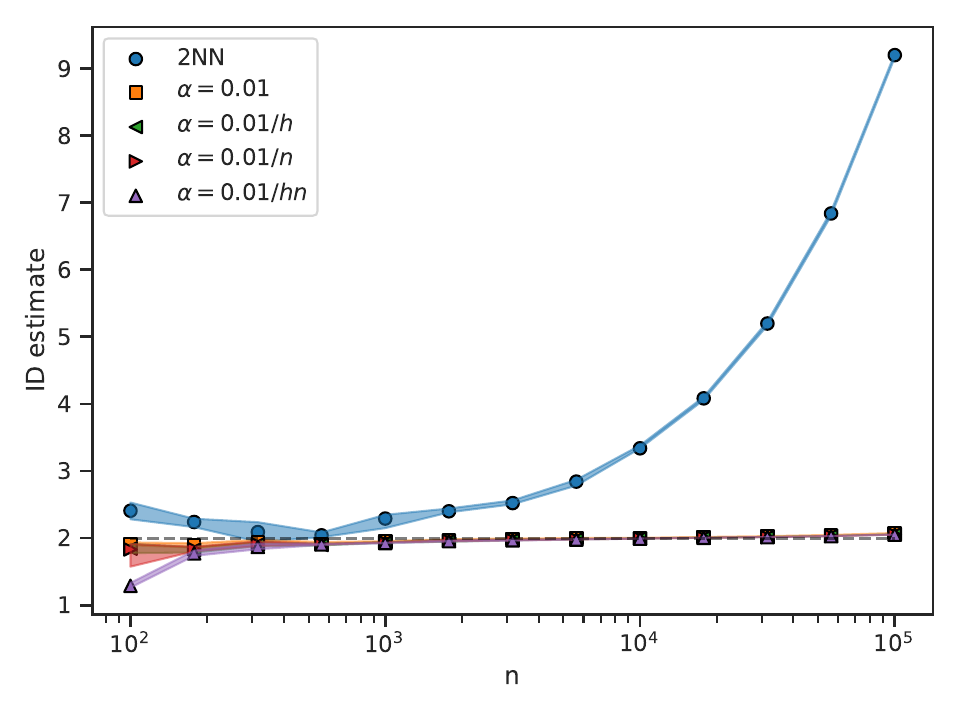}
    \includegraphics[width=0.45\linewidth]{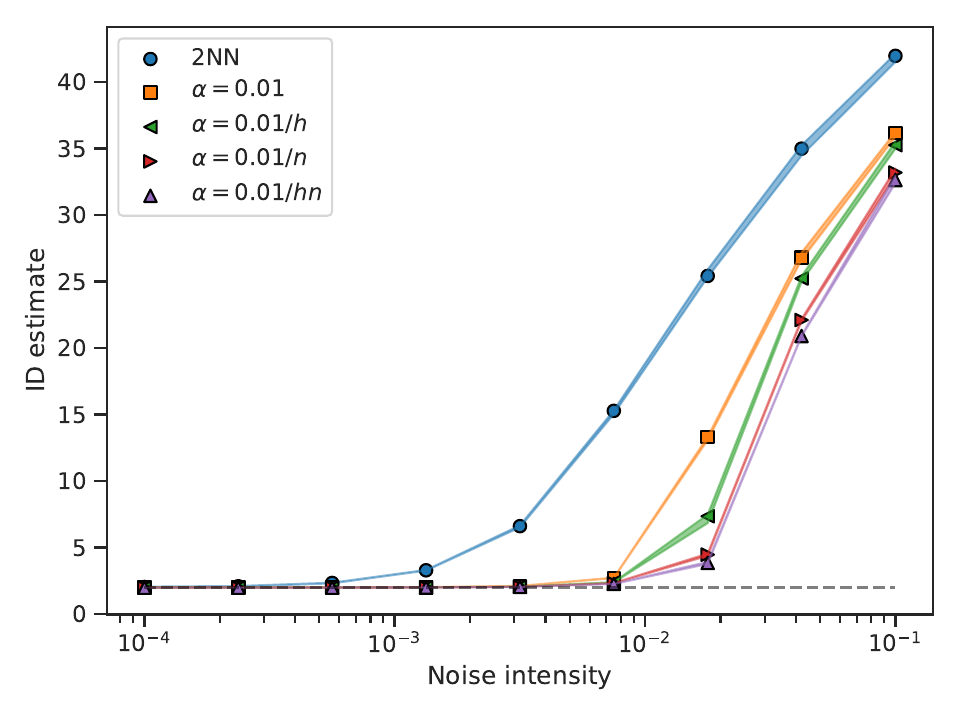}
    \caption{Effect of different Bonferroni correction methods on the ID estimates for a 2 dimensional Gaussian embedded in 100 dimensions. \textbf{Left}: ID estimates as a function of the number of data points with the noise fixed to $\sigma_\epsilon=10^{-3}$. \textbf{Right}: ID estimates as a function of the noise with number of point fixed to $n=5000$.}
    \label{fig:bonferroni}
\end{figure}

In Table~\ref{tab:exec_times} we report the execution times for the noisy Gaussian distribution (Section~\ref{sec:noisy_gaussian}) and for the M\"obius strip (Section~\ref{sec:moebius}). It is evident from Table~\ref{tab:exec_times} that the nearest-neighbour computation dominates the execution time, being on average five times slower than the 5-iteration ABIDE workflow.

\begin{table}[htb]
    \centering
    \begin{tabular}{l||c|c|c|c|c}
        \toprule
        Dataset $(n \times D)$& NN & $\alpha=0.01$ & $\alpha=\frac{0.01}{h}$ & $\alpha=\frac{0.01}{n}$ & $\alpha=\frac{0.01}{nh}$ \\
        \midrule
        M\"obius strip $(20k \times 50)$  & 5.3 & 0.22 & 0.27 & 0.36 & 0.51 \\
        Gaussian $(5k\times100)$ & 2.5 & 0.10 & 0.11 & 0.12 & 0.17 \\
        \bottomrule
    \end{tabular}
    \caption{Running times (in seconds) across two datasets for a single 
    i7-1165G7 core. The values reported in the columns with $\alpha$ are intended for a single iteration step, comprehensive of both $k^*$ and ID calculation.}
    \label{tab:exec_times}
\end{table}

\subsection{Comparison with other NN-based ID estimators}
\label{sec:comparison}
We compare the performance of both ABIDE and AGRIDE against the other most used kNN estimators, namely DANCo~\citep{ceruti2014danco}, MLE~\citep{levina2004maximum}, TLE~\citep{amsaleg2019intrinsic}, and ESS~\citep{johnsson2014low}.
We reproduce the experiment of Panel F of Figure~\ref{fig:moebius1}, namely a 2-dimensional, highly non-uniform manifold, which is twisted and then embedded in 50 dimensions. The noise is then added to all the 50 dimensions. 
The experiments were carried out using the Scikit-dimension~\citep{Bac2021} package, setting the number of considered neighbours to 20.
\begin{figure}
    \centering
    \includegraphics[width=0.6\linewidth]{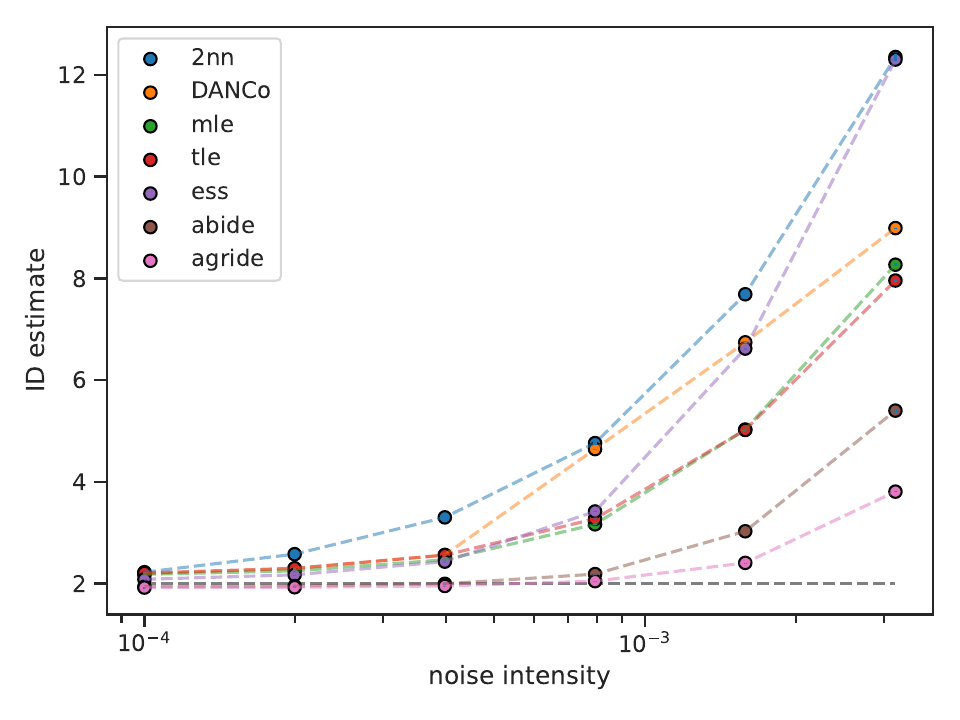}
    \caption{ID estimates as a function of the noise added to a 2-dimensional manifold twisted and embedded in 50 dimensions. We report the results for the most commonly used NN-based ID estimators.}
    \label{fig:comparison}
\end{figure}
From Figure~\ref{fig:comparison}, ABIDE and AGRIDE are manifestly superior in resisting to the noise and, thus, providing a better ID estimate.

\bibliographystyle{apalike}
\bibliography{bib}

@article{macocco,
  title = {Intrinsic Dimension Estimation for Discrete Metrics},
  author = {Macocco, Iuri and Glielmo, Aldo and Grilli, Jacopo and Laio, Alessandro},
  journal = {Phys. Rev. Lett.},
  volume = {130},
  issue = {6},
  pages = {067401},
  numpages = {6},
  year = {2023},
  month = {Feb},
  publisher = {American Physical Society},
  doi = {10.1103/PhysRevLett.130.067401},
  url = {https://link.aps.org/doi/10.1103/PhysRevLett.130.067401}
}

@incollection{denti2023bayesian,
  title={Bayesian nonparametric estimation of heterogeneous intrinsic dimension via product partition models},
  author={Denti, Francesco and Di Noia, A and Mira, A},
  booktitle={Book of the Short Papers SEAS IN 2023},
  pages={316--321},
  year={2023},
  publisher={Pearson}
}

@article{denti,
	author = {Denti, Francesco and Doimo, Diego and Laio, Alessandro and Mira, Antonietta},
	date = {2022/11/21},
	date-added = {2023-04-25 09:02:37 +0200},
	date-modified = {2023-04-25 09:02:37 +0200},
	doi = {10.1038/s41598-022-20991-1},
	id = {Denti2022},
	isbn = {2045-2322},
	journal = {Scientific Reports},
	number = {1},
	pages = {20005},
	title = {The generalized ratios intrinsic dimension estimator},
	url = {https://doi.org/10.1038/s41598-022-20991-1},
	volume = {12},
	year = {2022},
	bdsk-url-1 = {https://doi.org/10.1038/s41598-022-20991-1}}

@article{campadelli,
	author = {Lee, Sangmin and Campadelli, P. and Casiraghi, E. and Ceruti, C. and Rozza, A.},
	date = {2015/10/19},
	date-added = {2023-04-25 09:03:37 +0200},
	date-modified = {2023-04-25 09:03:37 +0200},
	doi = {10.1155/2015/759567},
	isbn = {1024-123X},
	journal = {Mathematical Problems in Engineering},
	pages = {759567},
	publisher = {Hindawi Publishing Corporation},
	title = {Intrinsic Dimension Estimation: Relevant Techniques and a Benchmark Framework},
	url = {https://doi.org/10.1155/2015/759567},
	volume = {2015},
	year = {2015},
	bdsk-url-1 = {https://doi.org/10.1155/2015/759567}}

@article{rodriguez,
	annote = {doi: 10.1021/acs.jctc.7b00916},
	author = {Rodriguez, Alex and d'Errico, Maria and Facco, Elena and Laio, Alessandro},
	date = {2018/03/13},
	date-added = {2023-04-25 09:07:41 +0200},
	date-modified = {2023-04-25 09:07:41 +0200},
	doi = {10.1021/acs.jctc.7b00916},
	isbn = {1549-9618},
	journal = {Journal of Chemical Theory and Computation},
	journal1 = {Journal of Chemical Theory and Computation},
	journal2 = {J. Chem. Theory Comput.},
	month = {03},
	number = {3},
	pages = {1206--1215},
	publisher = {American Chemical Society},
	title = {Computing the Free Energy without Collective Variables},
	type = {doi: 10.1021/acs.jctc.7b00916},
	url = {https://doi.org/10.1021/acs.jctc.7b00916},
	volume = {14},
	year = {2018},
	year1 = {2018},
	bdsk-url-1 = {https://doi.org/10.1021/acs.jctc.7b00916}}

@article{epps1986omnibus,
  title={An omnibus test for the two-sample problem using the empirical characteristic function},
  author={Epps, TW and Singleton, Kenneth J},
  journal={Journal of Statistical Computation and Simulation},
  volume={26},
  number={3-4},
  pages={177--203},
  year={1986},
  publisher={Taylor \& Francis}
}

@article{carli2021statistically,
  title={Statistically unbiased free energy estimates from biased simulations},
  author={Carli, Matteo and Laio, Alessandro},
  journal={Molecular Physics},
  volume={119},
  number={19-20},
  pages={e1899323},
  year={2021},
  publisher={Taylor \& Francis}
}

@article{mendes2021intrinsic,
  title={Intrinsic dimension of path integrals: Data-mining quantum criticality and emergent simplicity},
  author={Mendes-Santos, Tiago and Angelone, Adriano and Rodriguez, Alex and Fazio, Rosario and Dalmonte, Marcello},
  journal={PRX Quantum},
  volume={2},
  number={3},
  pages={030332},
  year={2021},
  publisher={APS}
}

@article{mendes2021unsupervised,
  title={Unsupervised learning universal critical behavior via the intrinsic dimension},
  author={Mendes-Santos, Tiago and Turkeshi, X and Dalmonte, M and Rodriguez, Alex},
  journal={Physical Review X},
  volume={11},
  number={1},
  pages={011040},
  year={2021},
  publisher={APS}
}

@article{ansuini2019intrinsic,
  title={Intrinsic dimension of data representations in deep neural networks},
  author={Ansuini, Alessio and Laio, Alessandro and Macke, Jakob H and Zoccolan, Davide},
  journal={Advances in Neural Information Processing Systems},
  volume={32},
  year={2019}
}

@article{dadapy_final,
title = {{DADApy: Distance-based analysis of data-manifolds in Python}},
journal = {Patterns},
volume = {3},
number = {10},
pages = {100589},
year = {2022},
issn = {2666-3899},
doi = {https://doi.org/10.1016/j.patter.2022.100589},
url = {https://www.sciencedirect.com/science/article/pii/S2666389922002070},
author = {Aldo Glielmo and Iuri Macocco and Diego Doimo and Matteo Carli and Claudio Zeni and Romina Wild and Maria d’Errico and Alex Rodriguez and Alessandro Laio},
keywords = {manifold analysis, intrinsic dimension, density estimation, density-based clustering, metric learning, feature selection},
abstract = {DADApy is a Python software package for analyzing and characterizing high-dimensional data manifolds. It provides methods for estimating the intrinsic dimension and the probability density, for performing density-based clustering, and for comparing different distance metrics. We review the main functionalities of the package and exemplify its usage in a synthetic dataset and in a real-world application. DADApy is freely available under the open-source Apache 2.0 license.}
}

@article{Facco2017,
abstract = {Analyzing large volumes of high-dimensional data is an issue of fundamental importance in data science, molecular simulations and beyond. Several approaches work on the assumption that the important content of a dataset belongs to a manifold whose Intrinsic Dimension (ID) is much lower than the crude large number of coordinates. Such manifold is generally twisted and curved; in addition points on it will be non-uniformly distributed: two factors that make the identification of the ID and its exploitation really hard. Here we propose a new ID estimator using only the distance of the first and the second nearest neighbor of each point in the sample. This extreme minimality enables us to reduce the effects of curvature, of density variation, and the resulting computational cost. The ID estimator is theoretically exact in uniformly distributed datasets, and provides consistent measures in general. When used in combination with block analysis, it allows discriminating the relevant dimensions as a function of the block size. This allows estimating the ID even when the data lie on a manifold perturbed by a high-dimensional noise, a situation often encountered in real world data sets. We demonstrate the usefulness of the approach on molecular simulations and image analysis.},
author = {Facco, Elena and D'Errico, Maria and Rodriguez, Alex and Laio, Alessandro},
doi = {10.1038/s41598-017-11873-y},
file = {:u/i/imacocco/.var/app/com.elsevier.MendeleyDesktop/data/data/Mendeley Ltd./Mendeley Desktop/Downloaded/Facco et al. - 2017 - Estimating the intrinsic dimension of datasets by a minimal neighborhood information.pdf:pdf},
issn = {20452322},
journal = {Scientific Reports},
number = {1},
pages = {1--8},
title = {{Estimating the intrinsic dimension of datasets by a minimal neighborhood information}},
volume = {7},
year = {2017}
}

@article{Allegra2020,
abstract = {One of the founding paradigms of machine learning is that a small number of variables is often sufficient to describe high-dimensional data. The minimum number of variables required is called the intrinsic dimension (ID) of the data. Contrary to common intuition, there are cases where the ID varies within the same data set. This fact has been highlighted in technical discussions, but seldom exploited to analyze large data sets and obtain insight into their structure. Here we develop a robust approach to discriminate regions with different local IDs and segment the points accordingly. Our approach is computationally efficient and can be proficiently used even on large data sets. We find that many real-world data sets contain regions with widely heterogeneous dimensions. These regions host points differing in core properties: folded versus unfolded configurations in a protein molecular dynamics trajectory, active versus non-active regions in brain imaging data, and firms with different financial risk in company balance sheets. A simple topological feature, the local ID, is thus sufficient to achieve an unsupervised segmentation of high-dimensional data, complementary to the one given by clustering algorithms.},
archivePrefix = {arXiv},
arxivId = {1902.10459},
author = {Allegra, Michele and Facco, Elena and Denti, Francesco and Laio, Alessandro and Mira, Antonietta},
doi = {10.1038/s41598-020-72222-0},
eprint = {1902.10459},
file = {:u/i/imacocco/Documents/paper_books_ID_UL/hidalgo.pdf:pdf},
isbn = {0123456789},
issn = {20452322},
journal = {Scientific Reports},
number = {1},
pages = {1--12},
pmid = {33020515},
title = {{Data segmentation based on the local intrinsic dimension}},
volume = {10},
year = {2020}
}

@article{Bac2021,
abstract = {Dealing with uncertainty in applications of machine learning to real-life data critically depends on the knowledge of intrinsic dimensionality (ID). A number of methods have been suggested for the purpose of estimating ID, but no standard package to easily apply them one by one or all at once has been implemented in Python. This technical note introduces scikit-dimension, an open-source Python package for intrinsic dimension estimation. The scikit-dimension package provides a uniform implementation of most of the known ID estimators based on the scikit-learn application programming interface to evaluate the global and local intrinsic dimension, as well as generators of synthetic toy and benchmark datasets widespread in the literature. The package is developed with tools assessing the code quality, coverage, unit testing and continuous integration. We briefly describe the package and demonstrate its use in a large-scale (more than 500 datasets) benchmarking of methods for ID estimation for real-life and synthetic data.},
author = {Bac, Jonathan and Mirkes, Evgeny M. and Gorban, Alexander N. and Tyukin, Ivan and Zinovyev, Andrei},
doi = {10.3390/e23101368},
file = {:u/i/imacocco/Documents/paper_books_ID_UL/scikitdim.pdf:pdf},
issn = {1099-4300},
journal = {Entropy},
keywords = {effective dimension,intrinsic dimension,method benchmarking,python package},
month = {10},
number = {10},
pages = {1368},
title = {{Scikit-Dimension: A Python Package for Intrinsic Dimension Estimation}},
url = {https://www.mdpi.com/1099-4300/23/10/1368},
volume = {23},
year = {2021}
}

@article{erba2019intrinsic,
  title={Intrinsic dimension estimation for locally undersampled data},
  author={Erba, Vittorio and Gherardi, Marco and Rotondo, Pietro},
  journal={Scientific reports},
  volume={9},
  number={1},
  pages={17133},
  year={2019},
  publisher={Nature Publishing Group UK London}
}

@article{levina2004maximum,
  title={Maximum likelihood estimation of intrinsic dimension},
  author={Levina, Elizaveta and Bickel, Peter},
  journal={Advances in neural information processing systems},
  volume={17},
  year={2004}
}

@article{ceruti2014danco,
  title={DANCo: An intrinsic dimensionality estimator exploiting angle and norm concentration},
  author={Ceruti, Claudio and Bassis, Simone and Rozza, Alessandro and Lombardi, Gabriele and Casiraghi, Elena and Campadelli, Paola},
  journal={Pattern recognition},
  volume={47},
  number={8},
  pages={2569--2581},
  year={2014},
  publisher={Elsevier}
}

@article{johnsson2014low,
  title={Low bias local intrinsic dimension estimation from expected simplex skewness},
  author={Johnsson, Kerstin and Soneson, Charlotte and Fontes, Magnus},
  journal={IEEE transactions on pattern analysis and machine intelligence},
  volume={37},
  number={1},
  pages={196--202},
  year={2014},
  publisher={IEEE}
}

@inproceedings{10.1145/1273496.1273530,
author = {Farahmand, Amir massoud and Szepesv\'{a}ri, Csaba and Audibert, Jean-Yves},
title = {Manifold-adaptive dimension estimation},
year = {2007},
isbn = {9781595937933},
publisher = {Association for Computing Machinery},
address = {New York, NY, USA},
url = {https://doi.org/10.1145/1273496.1273530},
doi = {10.1145/1273496.1273530},
abstract = {Intuitively, learning should be easier when the data points lie on a low-dimensional submanifold of the input space. Recently there has been a growing interest in algorithms that aim to exploit such geometrical properties of the data. Oftentimes these algorithms require estimating the dimension of the manifold first. In this paper we propose an algorithm for dimension estimation and study its finite-sample behaviour. The algorithm estimates the dimension locally around the data points using nearest neighbor techniques and then combines these local estimates. We show that the rate of convergence of the resulting estimate is independent of the dimension of the input space and hence the algorithm is "manifold-adaptive". Thus, when the manifold supporting the data is low dimensional, the algorithm can be exponentially more efficient than its counterparts that are not exploiting this property. Our computer experiments confirm the obtained theoretical results.},
booktitle = {Proceedings of the 24th International Conference on Machine Learning},
pages = {265–272},
numpages = {8},
location = {Corvalis, Oregon, USA},
series = {ICML '07}
}

@misc{MacKay,
    author = {MacKay, David J.C. and Ghahramani, Zoubin},
    title = {{Comments on 'Maximum Likelihood Estimation of Intrinsic Dimension' by E. Levina and P. Bickel (2004)}},
    url = {https://www.inference.org.uk/mackay/dimension/},
    year = {2005},
    lastchecked = {20.01.2026},
}

@article{d2021automatic,
  title={Automatic topography of high-dimensional data sets by non-parametric density peak clustering},
  author={d’Errico, Maria and Facco, Elena and Laio, Alessandro and Rodriguez, Alex},
  journal={Information Sciences},
  volume={560},
  pages={476--492},
  year={2021},
  publisher={Elsevier}
}

@misc{carli2024densityestimationbinlessmultidimensional,
      title={Density Estimation via Binless Multidimensional Integration}, 
      author={Matteo Carli and Alex Rodriguez and Alessandro Laio and Aldo Glielmo},
      year={2024},
      eprint={2407.08094},
      archivePrefix={arXiv},
      primaryClass={stat.ML},
      url={https://arxiv.org/abs/2407.08094}, 
}

@ARTICLE{10296014,
  author={Tobin, Joshua and Zhang, Mimi},
  journal={IEEE Transactions on Pattern Analysis and Machine Intelligence}, 
  title={A Theoretical Analysis of Density Peaks Clustering and the Component-Wise Peak-Finding Algorithm}, 
  year={2024},
  volume={46},
  number={2},
  pages={1109-1120},
  keywords={Clustering algorithms;Level set;Clustering methods;Noise measurement;Neural networks;Density functional theory;Computer vision;Density-based clustering;nearest-neighbor graph;density peaks;semi-supervised clustering;multi-image matching},
  doi={10.1109/TPAMI.2023.3327471}}

@article{isomap,
  title={A global geometric framework for nonlinear dimensionality reduction},
  author={Tenenbaum, Joshua B and Silva, Vin de and Langford, John C},
  journal={science},
  volume={290},
  number={5500},
  pages={2319--2323},
  year={2000},
  publisher={American Association for the Advancement of Science}
}

@article{lle,
  title={Nonlinear dimensionality reduction by locally linear embedding},
  author={Roweis, Sam T and Saul, Lawrence K},
  journal={science},
  volume={290},
  number={5500},
  pages={2323--2326},
  year={2000},
  publisher={American Association for the Advancement of Science}
}

@article{umap,
  title={Umap: Uniform manifold approximation and projection for dimension reduction},
  author={McInnes, Leland and Healy, John and Melville, James},
  journal={arXiv preprint arXiv:1802.03426},
  year={2018}
}

@misc{misc_optical_recognition_of_handwritten_digits_80,
  author       = {Alpaydin,E. and Kaynak,C.},
  title        = {{Optical Recognition of Handwritten Digits}},
  year         = {1998},
  howpublished = {UCI Machine Learning Repository},
  note         = {{DOI}: https://doi.org/10.24432/C50P49}
}

@article{lecun2010mnist,
  title={MNIST handwritten digit database},
  author={LeCun, Yann and Cortes, Corinna and Burges, CJ},
  journal={ATT Labs [Online]. Available: http://yann.lecun.com/exdb/mnist},
  volume={2},
  year={2010}
}

@article{honda_2004, 
title={10 residue folded peptide designed by segment statistics},
volume={12},
DOI={10.1016/j.str.2004.05.022},
number={8},
journal={Structure},
author={Honda, Shinya and Yamasaki, Kazuhiko and Sawada, Yoshito and Morii, Hisayuki},
year={2004},
pages={1507–1518}
}

@article{bonomi2009plumed,
  title={PLUMED: A portable plugin for free-energy calculations with molecular dynamics},
  author={Bonomi, Massimiliano and Branduardi, Davide and Bussi, Giovanni and Camilloni, Carlo and Provasi, Davide and Raiteri, Paolo and Donadio, Davide and Marinelli, Fabrizio and Pietrucci, Fabio and Broglia, Ricardo A and others},
  journal={Computer Physics Communications},
  volume={180},
  number={10},
  pages={1961--1972},
  year={2009},
  publisher={Elsevier}
}

@article{cossio2011similarity,
  title={Which similarity measure is better for analyzing protein structures in a molecular dynamics trajectory?},
  author={Cossio, Pilar and Laio, Alessandro and Pietrucci, Fabio},
  journal={Physical Chemistry Chemical Physics},
  volume={13},
  number={22},
  pages={10421--10425},
  year={2011},
  publisher={Royal Society of Chemistry}
}

@article{sugita1999replica,
  title={Replica-exchange molecular dynamics method for protein folding},
  author={Sugita, Yuji and Okamoto, Yuko},
  journal={Chemical physics letters},
  volume={314},
  number={1-2},
  pages={141--151},
  year={1999},
  publisher={Elsevier}
}

@article{kirk2002handbook,
  title={Handbook of metric fixed point theory},
  author={Kirk, William A and Sims, Brailey},
  journal={Australian Mathematical Society GAZETTE},
  volume={29},
  number={2},
  year={2002},
  publisher={Springer}
}

@article{wilks1938large,
  title={The large-sample distribution of the likelihood ratio for testing composite hypotheses},
  author={Wilks, Samuel S},
  journal={The Annals of Mathematical Statistics},
  volume={9},
  number={1},
  pages={60--62},
  year={1938},
  publisher={JSTOR}
}

@book{lecam,
  title={Asymptotic methods in statistical decision theory},
  author={Le Cam, Lucien},
  year={1986},
  publisher={Springer Science \& Business Media}
}

@book{biau2015lectures,
  title={Lectures on the nearest neighbor method},
  author={Biau, G{\'e}rard and Devroye, Luc},
  volume={246},
  year={2015},
  publisher={Springer}
}

@article{rozza2012novel,
  title={Novel high intrinsic dimensionality estimators},
  author={Rozza, Alessandro and Lombardi, Gabriele and Ceruti, Claudio and Casiraghi, Elena and Campadelli, Paola},
  journal={Machine learning},
  volume={89},
  pages={37--65},
  year={2012},
  publisher={Springer}
}

@article{gliozzo2024intrinsic,
  title={Intrinsic-Dimension analysis for guiding dimensionality reduction and data fusion in multi-omics data processing},
  author={Gliozzo, Jessica and Guarino, Valentina and Bonometti, Arturo and Cabri, Alberto and Cavalleri, Emanuale and Soto Gomez, Mauricio and Reese, Justin and Robinson, Peter Nicholas and Mesiti, Marco and Valentini, Giorgio and others},
  journal={bioRxiv},
  pages={2024--01},
  year={2024},
  publisher={Cold Spring Harbor Laboratory}
}

@article{valeriani2024geometry,
  title={The geometry of hidden representations of large transformer models},
  author={Valeriani, Lucrezia and Doimo, Diego and Cuturello, Francesca and Laio, Alessandro and Ansuini, Alessio and Cazzaniga, Alberto},
  journal={Advances in Neural Information Processing Systems},
  volume={36},
  year={2024}
}

@article{facco2019intrinsic,
  title={The intrinsic dimension of protein sequence evolution},
  author={Facco, Elena and Pagnani, Andrea and Russo, Elena Tea and Laio, Alessandro},
  journal={PLoS computational biology},
  volume={15},
  number={4},
  pages={e1006767},
  year={2019},
  publisher={Public Library of Science San Francisco, CA USA}
}

@article{block2022intrinsic,
  title={{Intrinsic dimension estimation using Wasserstein distance}},
  author={Block, Adam and Jia, Zeyu and Polyanskiy, Yury and Rakhlin, Alexander},
  journal={Journal of Machine Learning Research},
  volume={23},
  number={313},
  pages={1--37},
  year={2022}
}

@article{serra2017dimension,
  title={Dimension estimation using random connection models},
  author={Serra, Paulo and Michel, M and others},
  journal={Journal of Machine Learning Research},
  volume={18},
  number={138},
  pages={1--35},
  year={2017}
}

@article{pope2021intrinsic,
  title={The intrinsic dimension of images and its impact on learning},
  author={Pope, Phillip and Zhu, Chen and Abdelkader, Ahmed and Goldblum, Micah and Goldstein, Tom},
  journal={arXiv preprint arXiv:2104.08894},
  year={2021}
}

@article{van2009dimensionality,
  title={Dimensionality reduction: a comparative},
  author={Van Der Maaten, Laurens and Postma, Eric and Van den Herik, Jaap and others},
 journal={Journal of Machine Learning Research},
  volume={10},
  number={66-71},
  year={2009}
}

@article{rosa2024posterior,
  title={{Posterior contraction rates for Mat{\'e}rn Gaussian processes on Riemannian manifolds}},
  author={Rosa, Paul and Borovitskiy, Slava and Terenin, Alexander and Rousseau, Judith},
  journal={Advances in Neural Information Processing Systems},
  volume={36},
  year={2024}
}

@article{berenfeld2022estimating,
  title={{Estimating a density near an unknown manifold: a Bayesian nonparametric approach}},
  author={Berenfeld, Cl{\'e}ment and Rosa, Paul and Rousseau, Judith},
  journal={arXiv preprint arXiv:2205.15717},
  year={2022}
}

@article{rosa2024nonparametric,
  title={Nonparametric regression on random geometric graphs sampled from submanifolds},
  author={Rosa, Paul and Rousseau, Judith},
  journal={arXiv preprint arXiv:2405.20909},
  year={2024}
}

@article{tang2024adaptive,
  title={{Adaptive Bayesian Regression on Data with Low Intrinsic Dimensionality}},
  author={Tang, Tao and Wu, Nan and Cheng, Xiuyuan and Dunson, David},
  journal={arXiv preprint arXiv:2407.09286},
  year={2024}
}

@article{kpotufe2011k,
  title={k-{NN} regression adapts to local intrinsic dimension},
  author={Kpotufe, Samory},
  journal={Advances in Neural Information Processing Systems},
  volume={24},
  year={2011}
}

@article{nakada2020adaptive,
  title={Adaptive approximation and generalization of deep neural network with intrinsic dimensionality},
  author={Nakada, Ryumei and Imaizumi, Masaaki},
  journal={Journal of Machine Learning Research},
  volume={21},
  number={174},
  pages={1--38},
  year={2020}
}

@book{van2000asymptotic,
  title={Asymptotic Statistics},
  author={van der Vaart, Aad W},
  volume={3},
  year={2000},
  publisher={Cambridge University Press}
}

@article{janson2004large,
  title={Large deviations for sums of partly dependent random variables},
  author={Janson, Svante},
  journal={Random Structures \& Algorithms},
  volume={24},
  number={3},
  pages={234--248},
  year={2004},
  publisher={Wiley Online Library}
}

@article{baldi1989normal,
  title={On normal approximations of distributions in terms of dependency graphs},
  author={Baldi, Pierre and Rinott, Yosef},
  journal={The Annals of Probability},
  volume={17},
  number={4},
  pages={1646--1650},
  year={1989},
  publisher={Institute of Mathematical Statistics}
}

@inproceedings{amsaleg2019intrinsic,
  title={Intrinsic dimensionality estimation within tight localities},
  author={Amsaleg, Laurent and Chelly, Oussama and Houle, Michael E and Kawarabayashi, Ken-Ichi and Radovanovi{\'c}, Milo{\v{s}} and Treeratanajaru, Weeris},
  booktitle={Proceedings of the 2019 SIAM international conference on data mining},
  pages={181--189},
  year={2019},
  organization={SIAM}
}

@article{fukunaga1971algorithm,
  title={An algorithm for finding intrinsic dimensionality of data},
  author={Fukunaga, Keinosuke and Olsen, David R},
  journal={IEEE Transactions on computers},
  volume={100},
  number={2},
  pages={176--183},
  year={1971},
  publisher={IEEE}
}

@article{grassberger1983measuring,
  title={Measuring the strangeness of strange attractors},
  author={Grassberger, Peter and Procaccia, Itamar},
  journal={Physica D: nonlinear phenomena},
  volume={9},
  number={1-2},
  pages={189--208},
  year={1983},
  publisher={Elsevier}
}

@inproceedings{costa2005estimating,
  title={Estimating local intrinsic dimension with k-nearest neighbor graphs},
  author={Costa, Jose A and Girotra, Abhishek and Hero, Alfred O},
  booktitle={IEEE/SP 13th Workshop on Statistical Signal Processing, 2005},
  pages={417--422},
  year={2005},
  organization={IEEE}
}

@article{chakraborty2024statistical,
  title={A Statistical Analysis for Supervised Deep Learning with Exponential Families for Intrinsically Low-dimensional Data},
  author={Chakraborty, Saptarshi and Bartlett, Peter L},
  journal={arXiv preprint arXiv:2412.09779},
  year={2024}
}

@article{qiu2022intrinsic,
  title={Intrinsic dimension estimation method based on correlation dimension and kNN method},
  author={Qiu, Haiquan and Yang, Youlong and Rezakhah, Saeid},
  journal={Knowledge-Based Systems},
  volume={235},
  pages={107627},
  year={2022},
  publisher={Elsevier}
}

@article{anandkumar2014tensor,
  title={Tensor decompositions for learning latent variable models},
  author={Anandkumar, Animashree and Ge, Rong and Hsu, Daniel and Kakade, Sham M and Telgarsky, Matus},
  journal={The Journal of Machine Learning Research},
  volume={15},
  number={1},
  pages={2773--2832},
  year={2014},
  publisher={JMLR. org}
}

@article{vandermeulen2021beyond,
  title={Beyond smoothness: Incorporating low-rank analysis into nonparametric density estimation},
  author={Vandermeulen, Robert A and Ledent, Antoine},
  journal={Advances in Neural Information Processing Systems},
  volume={34},
  pages={12180--12193},
  year={2021}
}

@article{amiridi2022low,
  title={Low-rank characteristic tensor density estimation part I: Foundations},
  author={Amiridi, Magda and Kargas, Nikos and Sidiropoulos, Nicholas D},
  journal={IEEE Transactions on Signal Processing},
  volume={70},
  pages={2654--2668},
  year={2022},
  publisher={IEEE}
}

@article{balzano2025overview,
  title={An overview of low-rank structures in the training and adaptation of large models},
  author={Balzano, Laura and Ding, Tianjiao and Haeffele, Benjamin D and Kwon, Soo Min and Qu, Qing and Wang, Peng and Wang, Zhangyang and Yaras, Can},
  journal={arXiv preprint arXiv:2503.19859},
  year={2025}
}

@article{jacot2023bottleneck,
  title={Bottleneck structure in learned features: Low-dimension vs regularity tradeoff},
  author={Jacot, Arthur},
  journal={Advances in Neural Information Processing Systems},
  volume={36},
  pages={23607--23629},
  year={2023}
}

@article{ledent2025generalization,
  title={Generalization Bounds for Rank-sparse Neural Networks},
  author={Ledent, Antoine and Alves, Rodrigo and Lei, Yunwen},
  journal={arXiv preprint arXiv:2510.21945},
  year={2025}
}

\end{document}